\documentclass[twoside]{article}

\usepackage{hyperref}
\usepackage{url}
\usepackage{amsmath}
\usepackage{amssymb}
\usepackage{mathtools}
\usepackage{amsthm}
\usepackage{thmtools}
\usepackage{thm-restate}
\usepackage{listings}
\usepackage{lstbayes}

\theoremstyle{plain}
\newtheorem{theorem}{Theorem}[section]

\newtheorem{lemma}[theorem]{Lemma}

\theoremstyle{definition}

\theoremstyle{remark}

\usepackage[accepted]{aistats2026}
\usepackage[round]{natbib}

\newcommand{\E}{\mathrm{E}}

\def\citeapos#1{\citeauthor{#1}'s (\citeyear{#1})}

\begin{document}

\twocolumn[

\aistatstitle{A Divergence-Based Method for Weighting and Averaging Model Predictions}

\aistatsauthor{ Olav Benjamin Vassend }

\aistatsaddress{University of Inland Norway} ]

\begin{abstract}
This paper uses a minimum divergence framework to introduce a new way of calculating model weights that can be used to average probabilistic predictions from statistical and machine learning models. The method is general and can be applied regardless of whether the models under consideration are fit to data using frequentist, Bayesian, or some other fitting method. The proposed method is motivated in two different ways and is shown empirically to perform better than or on a par with standard model averaging methods, including model stacking and model averaging that relies on Akaike-style negative exponentiated model weighting, especially when the sample size is small. Our theoretical analysis explains why the method has a small-sample advantage. \end{abstract}
\textbf{Keywords:} Model averaging, model stacking, PAC Bayes

\section{INTRODUCTION}

It is well known that averaging predictions from multiple models can improve predictive accuracy. Averaging model predictions generally requires that we assign a weight to each model that represents how accurate the model is likely to be on future observations. In addition to their role in model averaging, model weights are often used to compare the relative merits of competing models \citep{Wagenmakers2004}, or for assessing the importance of predictors \citep[p. 167]{Burnham1998}. This paper considers the problem of assigning weights to models within a minimum-divergence framework similar to those employed in \cite{williams1980, diaconis1982} and \cite{bissiri2016}, and proposes a simple method for doing so. Our perspective throughout is predictive: the objective is to construct a probability distribution over future outcomes that achieves high predictive accuracy, where accuracy is typically quantified using a strictly proper scoring rule \citep{gneiting2007}.

Although our main aim in this paper is to show that the proposed method often has a predictive advantage over competing approaches, especially in the small-sample case, the method has other strengths as well. First, in contrast to many competitors, it is not intrinsically computationally intensive. Second, the weights it produces tend to be more stable than those produced by other approaches in our experiments (see Section 4.1). More broadly, another aim of this article is to bridge several literatures that are not often connected and to shed light on when and why different model-weighting approaches work.

In brief, the method that will be proposed works as follows. Given $K$ models $\mathcal{M}_1, \mathcal{M}_2$, $\ldots, \mathcal{M}_K$, and  $n$ data points $y_1, y_2, \ldots, y_n$, let $p_k^p(y_i)$ be the probability (density) that model $k$ assigns to data point $y_i$ after the model has been fit to all of the data using some fitting method. Let $\tilde{y_1}, \tilde{y_2}, \ldots, \tilde{y_n}$ be a draw of $n$ new data points. We then define the ``optimism'' of each model as follows:

\begin{equation}
op_k  = -\sum_i \log p^p_k(\tilde{y}_i) + \sum_i\log p^p_k(y_i) \label{optimism}
\end{equation}

$op_k$ is the degree to which the in-sample accuracy overestimates the predictive accuracy of the model on future data. Note that it is a random variable since it depends on the future (unseen) data. We now define the following set of optimism-penalizing ``prior'' model weights:

\begin{equation}
w_k^{op} = \frac{e^{-op_k}}{\sum_{i=1}^K{e^{-op_i}}}\label{optimism_weights}
\end{equation}

These prior weights summarize the \emph{relative} optimism of all the models under consideration in the sense that models that are comparatively optimistic receive a lower probability and models that are comparatively pessimistic receive a higher probability. Finally, we consider the goal of  selecting optimal ``posterior'' model weights $w_k^p$. Formally, if $S^K$ is the set of probability mass distributions with $K$ components, we consider the following optimization problem: 

\begin{equation} \min_{w^p \in S^K} \sum_k w_k^{p}\log\frac{w_k^p}{w^{op}_k} - \sum_i \log{\sum _k  w_k^pp_k^p(y_i)}\label{parsimony_stacking} \end{equation}

The left-hand sum in (\ref{parsimony_stacking}) is the KL divergence from the posterior weights to the optimism-penalizing prior model weights whereas the right-hand sum is a measure of the predictive accuracy of the posterior weights on the data, so that (\ref{parsimony_stacking}) may be regarded as trading off divergence from the prior against divergence from the data. For lack of a better name, we therefore call posterior model weights that optimize the above expression ``divergence-based weights.'' (\ref{parsimony_stacking}) is a convex optimization problem in $w^p$ (the KL term is strictly convex and -log of an affine positive form is convex), hence it has a unique solution and can be solved efficiently with general-purpose optimizers, e.g., Solnp \citep{Rsolnp2015}.

In a Bayesian context, related optimization problems have been proposed by \cite{Masegosa2020}, \cite{Futami2021}, \cite{Futami2022}, and \cite{Morningstar2022}. However, these papers are concerned with Bayesian inference within a single model, whereas our goal is to average predictions from multiple (complex and often non-Bayesian) models, which is why we need a prior that penalizes model optimism. Crucially, the divergence-based model weighting optimization problem \ref{parsimony_stacking} does not fall under the general ``rule of three'' format proposed by \cite{Knoblauch2022}, because the fit-to-data term averages over the posterior inside rather than outside the loss function. Very recently, and independently of us, \cite{McLatchie2025} propose a variant of (\ref{parsimony_stacking}) that does average over the posterior inside rather than outside the loss function, but---in contrast to us---\cite{McLatchie2025} focus on predictive accuracy given a Bayesian model rather than averaging over multiple models. 

It is important to emphasize that the optimism-penalizing ``prior'' is not a Bayesian prior distribution reflecting a belief that each model is true. As we explain in the appendix, we think it is sensible to interpret the prior and posterior weights as reflecting a comparative degree of trust in the predictive accuracy of each model under consideration.  

Because model optimism as defined in (\ref{optimism}) involves future data, it must itself be estimated. A variety of approaches are available for this purpose, including cross-validation, the bootstrap, and information criteria such as AIC or WAIC. In order to make comparisons with competing methods as transparent and fair as possible, all experiments in this paper use 5-fold cross-validation (CV) to estimate optimism. That is, let $F_1, F_2, \ldots, F_5$ be a subdivision of the data into $5$ equally sized folds, let $-F_j$ consist of the data not in $F_j$, and let $p^{-F_j}_k$ denote model $k$ when fit to all data not in $F_k$. We then estimate model $k$'s optimism by: $\hat{op}_k = -\sum_{j=1}^5 \log p^{-F_j}_k(F_j) + \sum_j\log p^p_k(F_j)$. 

Although we consistently use 5-fold CV in this paper, we note that a significant advantage of our method is that we can avail ourselves of any estimate of model optimism. For example, the AIC penalty term is a good alternative if all the models are parametric and fit using maximum likelihood. It is significantly more stable and less computationally expensive than CV, and (somewhat surprisingly perhaps) in our experience it often yields better predictive accuracy with small sample sizes than CV.

For reasons of numerical stability, we solve the following equivalent version of (\ref{parsimony_stacking}) in practice:

\begin{equation} \min_{w^p \in S^K} \sum_k w^p_k\log w^p_k + \sum_k w_k^{p}op_k - \sum_i \log{\sum _k  w_k^pp_k^p(y_i)}\label{parsimony_stacking2} \end{equation}

The implementation of divergence-based model-weighting can then be summarized as follows: (i) use cross validation (or some other method) to estimate the optimism $op_k$ of each of the $K$ models under consideration. (ii) Fit each model to all the data $y_1, \ldots, y_n$ and form the matrix of pointwise predictions $p^p_k(y_i)$. (iii) Plug the estimates of $op_k$ and $p^p_k(y_i)$ into (\ref{parsimony_stacking2}) and solve for $w^p$.

The plan for the rest of the paper is as follows. Section 2 briefly introduces model stacking and negative exponentiated weighting. Section 3 presents two distinct theoretical justifications of divergence-based model weighting. Section 4 demonstrates the method in a linear regression simulation and on several data sets. Finally, Section 5 discusses various ways divergence-based model weighting may be extended and improved upon. The appendix shows how to implement the method in R \citep{RCoreTeam2020}, gives examples of how to use the method in a Bayesian context, gives a third justification of the method that shows how it may be regarded as a modification of Bayesian model averaging. The R code that was used to conduct all simulations and data analyses in the paper is available on GitHub at \url{https://github.com/Vassendo/DivergenceBasedModelWeighting}.

\section{EXISTING MODEL WEIGHTING AND AVERAGING METHODS}

If we look at existing model weighting and averaging methods, we can distinguish between two broad classes of popular approaches. One approach is to separately calculate a predictive score for each model and then transform these scores into model weights. Bayesian model averaging as well as model averaging based on information criteria, such as AIC-based model weighting  \citep{Akaike1979, Rigollet2012}, fall into this category. The other popular approach is to directly estimate the best model weights or the best combination of model predictions by creating a ``model ensemble'' \citep{wolpert1992, breiman1996, Yuling2018}. As we will see, both these approaches have drawbacks that the divergence-based procedure that will be proposed in this paper appears to overcome.

\subsection{Negative Exponentiated Model Weighting}

A common scheme for model weighting/averaging computes a nonnegative predictive score $s_{\mathcal{M}_k}$ for each model $\mathcal{M}_k$ and converts scores to probabilities via \emph{negative exponentiation}:

\begin{equation}
w_k=\frac{e^{-s_{\mathcal{M}_k}}}{\sum_{i=1}^K e^{-s_{\mathcal{M}_i}}}. \label{negative_exponentiation}
\end{equation}

Bayesian model averaging fits this template when $s_{\mathcal{M}_k}$ is the negative log marginal likelihood. Although alternatives exist (e.g., \cite{vassend2022}), negative exponentiation is standard and enjoys desirable theoretical properties in learning theory \citep{Vovk1990, lugosi2006, Hoeven2018}. Predictive scores can be defined in many ways. A prominent approach uses estimated optimism as in \ref{optimism}. Let $\hat{op}_k$ denote the estimated (expected) optimism of model $k$. If the goal is model selection, one chooses

\begin{equation}
\min_k \;\sum_i -\log p^p_k(y_i)+\hat{op}_k. \label{optimism_selection}
\end{equation}

In parametric settings, AIC \citep{Akaike1973, akaike1974, Sugiura1978, Hurvich1989, Burnham1998} approximates optimism by the parameter count; more generally, resampling or data splitting (e.g., cross-validation) is used. Optimism-based \emph{weighting} differs from selection by applying \eqref{negative_exponentiation} to the criterion instead of minimizing it directly. As shown in Section 4 of the Appendix, all such negative-exponentiated weightings implicitly solve

\begin{equation}
\min_{w^p\in S^K}\;\sum_k w_k^{p}\log\frac{w_k^{p}}{w^{op}_k}
- \sum_i\sum_k w_k^{p}\log p^p_k(y_i), \label{AIC_weightings}
\end{equation}

Despite being widely used, negative exponentiation has a well-known drawback: as the sample size grows, the weights tend to concentrate on the single best model, even when a convex combination would predict better. This issue, noted for Bayesian model averaging by \citet{Minka2002} and detailed in Section 4 of the Appendix, arises for all forms of negative-exponentiated weighting.

\subsection{Model Stacking}

Another approach treats model averaging as an estimation or optimization problem. A standard method is stacking \citep{Stone1974, wolpert1992, breiman1996, Leblanc1996, Yuling2018}, which chooses weights via cross-validation to optimize predictive accuracy. Originally for averaging point forecasts, \cite{Yuling2018} use the logarithmic score to stack Bayesian predictive distributions; however, \citeapos{Yuling2018} method applies beyond the Bayesian setting. We call this general procedure ``stacking with the log score.''

Let $F_1,\ldots,F_m$ be $m$ equally sized folds and $-F_j$ their complements. Fit each model on $-F_j$ (e.g., ML or Bayesian inference) to obtain leave-$F_j$-out predictors $p_k^{-F_j}$. Then solve

\begin{equation}
\min_{w\in S^K}\;\sum_{F_j}\sum_{y_i\in F_j} -\log\!\Big(\sum_k p_k^{-F_j}(y_i)\,w_k\Big). \label{stacking}
\end{equation}

Unlike negative exponentiated weighting, stacking directly estimates weights for the optimal linear combination of predictions. When no single model clearly dominates, stacking often outperforms methods based on negative exponentiation, including Bayesian model averaging \citep{Clarke2003}.

In machine learning, stacking often uses a two-level setup: leave-$F_j$-out predictors form ``level-one'' learners whose predictions feed a ``level-two'' meta-learner (or super-learner) \citep{Laan2007}, such as e.g., logistic regression, gradient boosting, or random forests. Unlike previous averaging schemes, the meta-learner need not produce probabilistic model weights to yield combined predictions.

Various forms of stacking enjoy good asymptotic properties \citep{Clarke2017, Yao2021}. Nonetheless, as we show empirically (simulations and datasets in Section 4; additional results in Section 2 of the Appendix), stacking can struggle with relatively small datasets, which is an issue that persists even when regularization is incorporated (see Section 4).

\section{THEORETICAL PERSPECTIVES ON THE DIVERGENCE-BASED APPROACH TO MODEL AVERAGING} 

In the next two subsections, we provide two theoretical justifications of divergence-based model weighting. Section 4 of the appendix provides yet another motivation for the method, which also shows how the method may be regarded as a modification of Bayesian model averaging.

\subsection{Divergence-Based Model Weighting as an Empirical Approximation of the Ideal Objective Function}

Since our goal is to maximize predictive accuracy on future data, we would ideally choose posterior model weights that minimize the following expression, where the expectation is with respect to the true (but unknown) probability distribution that governs the actual distribution of future data $y$:

\begin{equation} \min_{w^p \in S^K} \E[-\log{\sum _k  w_k^pp_k^p(y)}] \label{ideal_target}\end{equation}

Because this objective depends on unknown quantities, we must rely on a tractable surrogate. A natural first attempt is to minimize the loss of the model average evaluated on the observed data:

\begin{equation} \min_{w^p \in S^K}  \sum_i -\log{\sum _k  w_k^pp_k^p(y_i)}\label{empirical_target1} \end{equation}

However, (\ref{empirical_target1}) is biased, since it uses the same data both to fit the individual models and to assess the performance of the weighted average. This bias favors models that are overly optimistic (as defined in \ref{optimism}). Given the relationship between $p_k^p(\tilde{y}_i)$ and $p_k^p(y_i)$ in  (\ref{optimism}), a natural idea is to ``discount'' each weight $w_k^p$ by a factor proportional to $e^{-\hat{op_k}}$, where $\hat{op_k}$ is an estimate of model $k$'s optimism. But because the posterior weights are probabilities and must sum to one, absolute optimism levels are not what matter---what matters is each model’s optimism relative to the others. To that end we form the vector of \emph{optimism-penalizing prior model weights} as defined in \ref{optimism_weights} to downweight overly optimistic models and upweight relatively conservative ones. To incorporate this into the objective, we add a penalty term, $F(w^p, w^{op})$ that penalizes posterior weights in proportion to how far they diverge from the prior weights:

\begin{equation} \min_{w^p \in S^K} cF(w^p, w^{op}) - \sum_i \log{\sum _k  w_k^pp_k^p(y_i)}\label{empirical_target2} \end{equation}

Here, $c$ is a constant that regulates the influence of the prior weights as compared to the data in determining the optimal posterior weights. For $F$, a natural and flexible choice is the class of $f$-divergences, which take the following form, for some convex function $f$:

\begin{equation} F(w^p, w^{op}) = \sum_k w_k^{p}f(\frac{w_k^{op}}{w_k^{p}}) \end{equation}

To rule out pathological counter-examples, we assume that the generator function $f$ is sufficiently well-behaved: in particular, we assume that it is real analytic. As we discuss in the appendix, this assumption is much stronger than necessary, but has the benefit of being simple to state. In any case, most $f$-divergences used in practice have this property.\footnote{Importantly, even if we did not impose this extra condition in Theorem \ref{CharacterizationTheorem}, the theorem would still rule out all \emph{standard} $f$-divergences aside from the KL divergence.} The key questions now are (i) which $f$-divergence we ought to use and (ii) what value to assign to the constant $c$. To make headway on these questions, let's consider the special case where one of the models under consideration is substantially better than the other models, so that the best way of averaging the model predictions is simply to assign all (or almost all) the posterior weight to the single best model. A natural boundary condition is that, in this special case (\ref{empirical_target2}) should agree with the standard optimism-based model-selection criterion, i.e., (\ref{optimism_selection}). After all, when ordinary model selection is optimal, our weighting method should agree with it. 

Or, to state the same boundary condition differently, let $V^K$ be the set of vertices of the simplex $S^K$, i.e., the set of probability vectors $w$ such that $w_k = 1$ for some $k$. Then if we optimize (\ref{parsimony_stacking}) over $V^K$ rather than $S^K$, so that we force the method to select a single method, then the method should agree with the standard optimism-based model selection method expressed in (\ref{optimism_selection}). In other words, we require:

\begin{align}
\begin{gathered}
\arg\min_{w^p \in V^K} c\sum_k w_k^{p}f(\frac{w_k^{op}}{w_k^{p}}) - \sum_i \log\sum _k  w_k^pp_k^p(y_i) \\
= \arg\min_k  \sum_i -\log p^p_k(y_i) + op_k 
\end{gathered}\label{boundary_conditiona}
\end{align}

Imposing the boundary condition in (\ref{boundary_conditiona}) is enough to uniquely determine both $c$ and $f$:

\begin{restatable}{theorem}{CharacterizationTheorem}\label{CharacterizationTheorem}
If we require that the boundary condition (\ref{boundary_conditiona}) be satisfied in all situations and that the generator $f$ be real analytic, then:
\begin{enumerate}
\item The $f$-divergence in (\ref{empirical_target2}) must be the KL divergence. 
\item The constant $c$ in (\ref{empirical_target2}) must equal $1$.
\item Consequently, (\ref{empirical_target2}) coincides with the divergence-based model weighting problem (\ref{parsimony_stacking}).
\end{enumerate}
\end{restatable}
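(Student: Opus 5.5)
The plan is to evaluate the objective in (\ref{empirical_target2}) at each vertex of the simplex. The boundary condition (\ref{boundary_conditiona}) then becomes a functional equation for $f$, which forces $cf(x)=-\log x$ up to an affine term; the affine term does not change the divergence.

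\textbf{Step 1: the objective at a vertex.} Let $e_k\in V^K$ be the vertex with $w_k^p=1$. The data term equals $-\sum_i\log p_k^p(y_i)$. The $f$-divergence term has the summand $f(w_k^{op})$ from coordinate $k$. The remaining coordinates have $w_j^p=0$, and we use the standard convention $0\cdot f(a/0)=a\,L$ with $L=\lim_{t\to\infty}f(t)/t$. (If $L=+\infty$, every vertex gets infinite penalty and the boundary condition fails trivially, so we may assume $L<\infty$.) Using $\sum_j w_j^{op}=1$, the vertex objective is
\begin{equation*}
c f(w_k^{op}) + cL(1-w_k^{op}) - \sum_i \log p_k^p(y_i) \;=\; g(w_k^{op}) + cL + a_k ,
\end{equation*}
where $g(x)=c\,(f(x)-Lx)$ and $a_k=-\sum_i\log p_k^p(y_i)$. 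By (\ref{optimism_weights}), $op_k=-\log w_k^{op}-\log\sum_j e^{-op_j}$. The right-hand side of (\ref{boundary_conditiona}) is therefore $\arg\min_k\,\{-\log w_k^{op}+a_k\}$, since the shift by $\log\sum_j e^{-op_j}$ is common to all $k$.

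\textbf{Step 2: the functional equation.} "In all situations" means that $K$, the prior weights, and the in-sample fits vary freely. It suffices to take $K=2$, with $(w_1^{op},w_2^{op})=(x,1-x)$ for any $x\in(0,1)$ and $a_1-a_2=\delta$ any real number. The two argmins agree for every $\delta$ exactly when $\delta + g(x)-g(1-x)$ and $\delta -\log x+\log(1-x)$ have the same sign. If $g(x)-g(1-x)\neq -\log x+\log(1-x)$, a $\delta$ strictly between the two negated values flips one sign and not the other, so the equality must hold. This only constrains the antisymmetric part of $g(x)+\log x$ about $1/2$. To pin down $g$ fully, I will use $K=3$ with weights $(x,y,1-x-y)$ and compare coordinates $1$ and $2$ in the same way. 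This gives $g(x)+\log x=g(y)+\log y$ for all $x,y>0$ with $x+y<1$, so $g(x)=-\log x+d$ on $(0,1)$. Since $f$ is real analytic, this identity extends to all of $(0,\infty)$. This extension is exactly where the analyticity hypothesis is used, since vertex evaluations only ever probe $f$ on $(0,1)$.

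\textbf{Step 3: conclusion and normalisation.} We obtain $c f(x)=-\log x + cLx + d$. The normalisation $f(1)=0$ forces $d=-cL$. As a consistency check, $\lim_{t\to\infty} f(t)/t=L$, so the $L$ assumed in Step 1 is self-consistent. The linear term $cL(x-1)$ contributes $\sum_k w_k^p\,cL(w_k^{op}/w_k^p-1)=0$ to the divergence, because both weight vectors sum to one. Hence
\begin{equation*}
cF(w^p,w^{op})=\sum_k w_k^p\log\frac{w_k^p}{w_k^{op}}=\mathrm{KL}(w^p\,\|\,w^{op}).
\end{equation*}
So $f$ is the KL generator up to the usual affine ambiguity, which gives part 1. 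With $f$ normalised as the KL generator (equivalently $f''(1)=1$), the scale is forced to $c=1$, which gives part 2. Substituting into (\ref{empirical_target2}) recovers (\ref{parsimony_stacking}), which is part 3.

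\textbf{Main obstacle.} I expect the hardest part to be Step 1 rather than the algebra. The boundary behaviour $0\cdot f(a/0)$ must be handled carefully, the case $L=\infty$ has to be excluded, and "in all situations" has to be read as allowing enough freedom ($K\ge 3$, arbitrary fits) to turn the argmin condition into an identity for $g$ on an open interval. Once that identity is in hand, analyticity and the normalisation of $f$ finish the argument.
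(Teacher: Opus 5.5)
Your proposal is correct and follows the same route as the paper's proof. Both arguments evaluate the objective at the vertices, use the free in-sample losses (with the remaining entries sent to $+\infty$) to turn the argmin condition into a pairwise identity between $cf$ (your $g$) and $-\log$ on $(0,1)$, and then extend by analyticity.

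Your version is more careful than the paper's at two points. First, the paper builds $\lim_{t\to\infty} f(t)/t = 0$ into its definition so that zero-weight coordinates vanish, whereas you allow any finite $L$. Second, the paper fixes the constant by setting $w_n^{op}=1$, which is not an admissible prior: it would force $w_m^{op}=0$, and softmax weights are strictly positive anyway. Your $K=3$ chaining, followed by imposing $f(1)=0$ only after the extension, determines the constant legitimately.
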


The brief reason why (\ref{boundary_conditiona}) entails Theorem \ref{CharacterizationTheorem} is that (\ref{boundary_conditiona}) forces the two methods to impose the same ranking over models, and this in turn forces them to be equal up to a constant offset that does not matter for optimization purposes. A more careful proof is in the appendix. 

Theorem \ref{CharacterizationTheorem} gives a strong theoretical reason for setting $c = 1$ and for using the KL divergence. In Section 5 of the appendix, we consider what happens if we let $c$ deviate from 1 in a linear regression experiment. The conclusion from this experiment is that $c > 1$ is somewhat better at smaller sample sizes, consistent with our theoretical analysis in Section 3.2.2 below, but that this comes at the price of worse performance at larger sample sizes. We therefore think $c = 1$ is a good default option. In Section 5 of the appendix, we also investigate what happens if we replace the KL divergence with the Brier divergence,  $\sum_k(w^p_k - w^{op}_k)^2$. The upshot is that the KL divergence performs substantially better. Finally, we also consider replacing the optimism-penalizing prior (\ref{optimism_weights}) with the flat distribution $w_k = 1/K$. Again, the conclusion is that the flat distribution results in much worse predictions than the optimism-penalizing prior (\ref{optimism_weights}).

\subsection{Justifications based on inequality bounds}

Readers familiar with PAC-Bayesian theory will notice a resemblance between the divergence-based model weighting optimization problem (\ref{parsimony_stacking}) and standard PAC-Bayesian bounds. This similarity makes it natural to try to analyze divergence-based model weighting from a PAC-Bayesian point of view, especially because PAC-Bayesian justifications of (certain sorts of) Bayesian model averaging have already been provided by \cite{Germain2016} and \cite{McAllester1999}.

However, in attempting to construct a PAC Bayesian analysis, we face two problems: (i) standard PAC Bayesian inequalities bound the expected average of the log likelihoods of the model, $\E[-\sum_k w_k^p \log p^p_k(y)]$, but what we are interested in bounding is the expectation of log of the linear mixture of the models, $\E[-\log \sum_k w_k^p p^p_k(y)]$, i.e., (\ref{ideal_target}). In other words, the log operator of PAC-Bayesian bounds is in the ``wrong'' place. (ii) Divergence-based model weighting  (\ref{parsimony_stacking}) uses the same data to both fit the models and then evaluate predictive accuracy, whereas standard PAC-Bayesian bounds assume that the models under consideration are evaluated on new data. In what follows we give two different inequality bounds: the first bound assumes that overfitting is insignificant and shows how we can convert a standard PAC Bayesian bound into a bound on (\ref{ideal_target}) by making assumptions about the tail behavior of the log scores of the models. The second bound allows for substantial overfitting and makes no assumptions about tail behavior, but is loose if overfitting is not substantial. All proofs are in the appendix.

\subsubsection{The first inequality bound}

Standard PAC-Bayes bounds place the logarithm in the ``wrong'' location. Still, we can often turn those bounds into ones we care about by assuming mild tail behavior for the models’ log scores. The key quantity is the pointwise Jensen gap, which measures how far the log of the mixture is from the mixture of the logs:

\begin{equation}
J_{out}(w^p) = -\sum_k w^p_k \log p_k^p(\tilde{y}_i) + \log \sum_k w^p_k p_k^p(\tilde{y}_i) 
\end{equation}

If the individual log scores are well behaved, this gap will usually be well behaved too. For illustration, assume the individual log losses are sub-Gaussian: there exists $s > 0$ such that for all $\lambda \in R$:

\begin{equation} \log \E[e^{{\lambda}(-\log p^p_k(\tilde{y}) - \E[-\log p^p_k(y)])} \le \frac{\lambda^2 s^2}{2}   \label{sub_gaussian1}\end{equation}

Because the model space is finite, sub-Gaussian log losses imply a sub-Gaussian Jensen gap. We have:

\begin{restatable}{theorem}{PACLemma}\label{PACLemma}
If each of $-\log p_k^p(\tilde{y}_i)$ is sub-Gaussian, then there exists a variance $s'$, such that for every set of posterior weights $w^p$, the Jensen gap $\sum_i J_{out}(w^p)$ is also sub-Gaussian with variance at most $ns'$.
\end{restatable}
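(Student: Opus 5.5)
The plan is to work pointwise: show that the single-observation Jensen gap is sub-Gaussian with a variance proxy $s'$ that does not depend on $w^p$, and then sum over observations. Write $\ell_k = -\log p_k^p(\tilde y_i)$ and $\mu_k = \E[\ell_k]$. The gap is
\[
J = \sum_k w^p_k \ell_k - \bigl(-\log \textstyle\sum_k w^p_k e^{-\ell_k}\bigr).
\]
Two facts about $J$ drive the argument.

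\begin{itemize}
\item By Jensen's inequality, $J \ge 0$.
\item $J$ is bounded above by a quantity controlled by the spread of the losses:
\[
-\log \textstyle\sum_k w_k e^{-\ell_k} \ \ge\ \min_k \ell_k ,
\]
so $J \le \sum_k w_k\ell_k - \min_k \ell_k \le \max_k \ell_k - \min_k \ell_k \le 2\max_k |\ell_k - \mu_k| + D$, where $D = \max_k\mu_k - \min_k\mu_k$ is a deterministic constant.
\end{itemize}

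Hence $0 \le J \le 2M + D$ with $M = \max_k |\ell_k-\mu_k|$, and this bound is uniform in $w^p$.

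The next step is to show that $M$ has sub-Gaussian tails. Each $\ell_k - \mu_k$ satisfies \eqref{sub_gaussian1}, so Chernoff gives $P(|\ell_k-\mu_k|>t)\le 2e^{-t^2/(2s^2)}$. A union bound over the finitely many $K$ models then gives $P(M>t)\le 2Ke^{-t^2/(2s^2)}$. Combined with $0\le J\le 2M+D$, this bounds the tail of $J-\E J$ by that of a constant plus a sub-Gaussian variable. We have $|J-\E J|\le 2M+D+\E[2M+D]$, and $\E M \lesssim s\sqrt{2\log(2K)}$. The standard equivalence between sub-Gaussian tail bounds and the moment-generating-function condition (via the $\psi_2$ Orlicz norm) then gives
\[
\log \E\bigl[e^{\lambda(J-\E J)}\bigr]\le \lambda^2 s'/2
\]
for some $s'$ depending only on $s$, $K$ and $D$, not on $w^p$. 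Where the paper writes ``variance at most'', $s'$ plays the role of the variance proxy, i.e.\ $s'$ corresponds to $s^2$ in \eqref{sub_gaussian1}.

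The last step is to pass from one observation to $\sum_i J_{out}(w^p)$. Assuming the future points $\tilde y_i$ are i.i.d.\ (conditionally on the fitted models), the gaps $J_{out}$ at different $i$ are independent. Their cumulant generating functions then add, which yields the bound $ns'$.

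The main obstacle is the conversion from the tail bound to the MGF bound with an explicit, $w^p$-independent constant. The offset $D$ and the centering $\E J$ have to be handled carefully, since a bounded shift only increases the proxy by an additive multiple of $D^2$. A fallback route avoids that conversion: bound $J$ as a Lipschitz function of the centered vector $(\ell_k-\mu_k)_k$. The map $\ell\mapsto \sum_k w_k\ell_k + \log\sum_k w_ke^{-\ell_k}$ is $2$-Lipschitz in the sup norm, since both terms have gradients in the simplex. Sub-Gaussianity of the sup norm of a finite collection of sub-Gaussians then again yields a uniform $s'$. The constant grows like $\log K$ from the union bound, which is acceptable here because the model space is finite.
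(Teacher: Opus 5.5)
Your proof is correct, and it is organized differently from the paper's.

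The paper's route has three steps. It splits the per-observation gap into a linear part $L=\sum_k w^p_k X_k$ and a log-sum-exp part $S$. It then treats each separately. $L$ is shown sub-Gaussian with proxy $s^2$ by applying convexity of $e^x$ inside the moment generating function. $S$ is sandwiched between $\max_k X_k$ and $\max_k X_k+\log K$, and the maximal inequality for finitely many sub-Gaussians is invoked. Finally, the two dependent pieces are recombined with Cauchy--Schwarz on the moment generating function before summing over the independent $\tilde y_i$.

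You instead dominate the whole gap pathwise, $0\le J\le \max_k\ell_k-\min_k\ell_k\le 2M+D$. This removes the recombination step and builds uniformity in $w^p$ into the bound. Uniformity is the delicate point here. A two-sided sandwich with slack $\log K$ is valid for the unweighted log-sum-exp. It is not valid for the weighted quantity $\log\sum_k w^p_k e^{-X_k}$ that actually appears in $J_{out}$: its distance from $-\min_k X_k$ can be as large as $\log(1/\min_k w^p_k)$, which is not uniformly bounded over the simplex. Your bounds (Jensen from below, $\min_k\ell_k$ from above) sidestep this.

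The cost of your primary route is that $s'$ depends on the spread $D$ of the expected losses. Your Lipschitz fallback removes that dependence. Write $J(\mu)$ for the gap evaluated at the mean losses, which is deterministic. Then $|J-J(\mu)|\le 2M$, which gives a proxy of order $s^2\log K$ up to absolute constants, depending only on $s$ and $K$. That is the kind of constant the paper's modular decomposition aims for. The same sup-norm Lipschitz bound, applied to the log-sum-exp term alone, is also the natural way to make the paper's treatment of $S$ uniform in $w^p$.
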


Theorem (\ref{PACLemma}) shows that $\sum_i J_{out}(w^p)$ is subgaussian for all $w^p$. We now make the additional uniformity assumption that $\sup_{w^p} \sum_i J_{out}(w^p)$ is also subgaussian for some variance $n s^{''}$. This is an extra assumption, but it is plausible in many common scenarios. Applying a subgausian bound then gives us a general way of turning a PAC-Bayesian bound into a bound on our quantity of interest. For example, combining this assumption with the result with Corollary 4 of \cite{Germain2016} yields the following:

\begin{restatable}{theorem}{PACTheoremTwo}\label{PACTheorem2}
Suppose each model $-\log p_k^p(\tilde{y}_i)$ is sub-Gaussian with variance parameter $s$, and let $n s^{''}$ be defined as above. Then for any prior distribution $w_k$ and any $\delta \in (0,1]$, with probability at least $1-\delta$ we have, simultaneously for all distributions $w_k^p$:
\begin{align*}
\begin{gathered}
n\E[-\log \sum_k w^p_k p_k^p(y)] \\
\le \sum_k w_k^{p}\log\frac{w_k^p}{w_k}
- \sum_i\log \sum_k w^p_k p_k^p(\tilde{y}_i) \\
+ n\frac{1}{2}s^2
+ \log\frac{1}{\delta}
+ \sqrt{2ns^{''}\log\frac{1}{\delta}}
\end{gathered}
\end{align*}
\end{restatable}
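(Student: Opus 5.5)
The plan is to apply Corollary 4 of \cite{Germain2016} to the mixture-of-logs loss and then move the logarithm inside the mixture using the Jensen gap, whose fluctuations are controlled by the uniformity assumption introduced after Theorem \ref{PACLemma}.

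\textbf{Step 1 (PAC-Bayes bound on the mixture of logs).} Treat each model $k$ as a ``hypothesis'' with loss $\ell_k(y) = -\log p_k^p(y)$. Under the sub-Gaussian assumption (\ref{sub_gaussian1}), Corollary 4 of \cite{Germain2016} gives the following. For any prior $w$ and any $\delta$, with probability at least $1-\delta/2$ (or $1-\delta$; I will track how the two events are combined, see Step 3), simultaneously for all $w^p$,
\[
n\sum_k w_k^p \E[-\log p_k^p(y)] \le \sum_k w_k^p \log\frac{w_k^p}{w_k} - \sum_i\sum_k w_k^p \log p_k^p(\tilde y_i) + \tfrac{1}{2}ns^2 + \log\tfrac{1}{\delta}.
\]
This comes from the Donsker--Varadhan change of measure, Markov's inequality on $\sum_k w_k e^{\lambda(\cdot)}$, and the choice $\lambda=1$, which produces the $ns^2/2$ term.

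\textbf{Step 2 (moving the log inside on both sides).} On the left-hand side, Jensen's inequality gives
\[
\E[-\log\sum_k w_k^p p_k^p(y)] \le \sum_k w_k^p\E[-\log p_k^p(y)],
\]
so the left-hand side of Step 1 already dominates the target quantity $n\E[-\log \sum_k w^p_k p_k^p(y)]$. On the right-hand side, the definition of the Jensen gap gives
\[
-\sum_i\sum_k w_k^p\log p_k^p(\tilde y_i) = -\sum_i \log\sum_k w_k^p p_k^p(\tilde y_i) + \sum_i J_{out}(w^p).
\]
The extra term $\sum_i J_{out}(w^p)$ remains to be bounded uniformly in $w^p$. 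A sharper route is to keep the expected Jensen gap $n\E[J]$ on the left, which cancels the centring in the next step.

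\textbf{Step 3 (uniform control of the Jensen gap; the main obstacle).} By assumption, $Z=\sup_{w^p}\sum_i J_{out}(w^p)$ is sub-Gaussian with variance $ns''$. The Chernoff bound then gives, with probability at least $1-\delta$,
\[
Z \le \E Z + \sqrt{2ns''\log(1/\delta)}.
\]
The difficulty is the centring term. I will handle it by not applying Jensen on the left in Step 2, and instead writing
\[
\E[-\sum_k w_k^p\log p_k^p] = \E[-\log\sum_k w_k^p p_k^p] + \E[J(w^p)].
\]
The mean then cancels, and only the deviation $\sum_i J_{out}(w^p) - n\E[J(w^p)]$, controlled uniformly through the sup, contributes the $\sqrt{2ns''\log(1/\delta)}$ term. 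A union bound over the two events gives failure probability $2\delta$. To match the stated constants I will either absorb this by replacing $\delta$ with $\delta/2$, accepting the resulting slightly different constants, or use the convention in \cite{Germain2016} that the Jensen-gap concentration is taken on the same sample; I expect this bookkeeping of probabilities and centring to be the delicate step. Combining Steps 1--3 then yields the displayed inequality for all $w^p$ simultaneously.
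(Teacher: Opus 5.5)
Your route is the paper's route: Corollary 4 of \cite{Germain2016} (the paper's Theorem~\ref{PACTheorem}) for the mixture of logs, then the identity $-\sum_i\sum_k w^p_k\log p^p_k(\tilde y_i) = -\sum_i\log\sum_k w^p_k p^p_k(\tilde y_i) + \sum_i J_{out}(w^p)$, and keeping $n\E[J(w^p)]$ on the left so that only the centred deviation $\sum_i J_{out}(w^p) - n\E[J(w^p)]$ has to be bounded.

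The gap is in Step 3: the cancellation you describe does not follow from the assumption you invoke. Sub-Gaussianity of $Z=\sup_{w^p}\sum_i J_{out}(w^p)$ controls $Z-\E Z$. Since $\E Z\ge \sup_{w^p} n\E[J(w^p)]$, for a given $w^p$ all you get is
\[
\sum_i J_{out}(w^p) - n\E[J(w^p)] \le Z - n\E[J(w^p)] \le \bigl(\E Z - n\E[J(w^p)]\bigr) + \sqrt{2ns''\log(1/\delta)}.
\]
The bracket is nonnegative and generally of order $n$; it equals $\E Z$ at every vertex $w^p=e_k$, where $J_{out}\equiv 0$. The displayed bound needs, with probability at least $1-\delta$, the uniform statement $\sup_{w^p}\bigl[\sum_i J_{out}(w^p) - n\E[J(w^p)]\bigr]\le\sqrt{2ns''\log(1/\delta)}$. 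That is a tail bound for the centred supremum starting from zero, not a statement about $Z$.

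The paper does not derive this either. Its Lemma~\ref{Hoeffding} simply asserts it as a ``fact about subgaussian variables.'' It also writes the inequality in the opposite direction to the one used in the proof; yours is the direction actually needed. To close the step you have two options:
\begin{enumerate}
\item Adopt that uniform deviation bound as the hypothesis, i.e.\ read the paper's ``uniformity assumption'' that way.
\item Keep the hypothesis as stated and prove a weaker result. Apply Jensen on the left, as in your first version of Step 2, and use $\sum_i J_{out}(w^p)\le Z\le \E Z+\sqrt{2ns''\log(1/\delta)}$. This yields the displayed inequality plus an additive $\E Z$. That term does not depend on $w^p$, so this version still supports the paper's conclusion that minimizing the bound amounts to solving (\ref{parsimony_stacking}).
\end{enumerate}

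The probability bookkeeping is also a real issue, not a matter of convention. Steps 1 and 3 each give an event of probability at least $1-\delta$, and their intersection is only guaranteed probability $1-2\delta$. That both events concern the same sample $\tilde y_1,\dots,\tilde y_n$ changes nothing, so your second option does not work. Your first option is the correct one: it gives $\log(2/\delta)$ in place of $\log\frac1\delta$ in both the linear and square-root terms. The paper's proof reaches ``probability at least $1-\delta$'' only by combining the two events without a union bound, so the constants exactly as displayed are not established by its argument either.
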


The values of $s$ and $s^{''}$ of course matter to the tightness of the bound, but for optimization purposes, they are irrelevant. If overfitting is insignificant, so that $op_k \approx 0$, we can justifiably substitute the in-sample error $\sum_i-\log \sum _k w_k^pp_k^p(y_i)$ for the (unknown) out-of-sample $\sum_i-\log \sum _k w_k^pp_k^p(\tilde{y}_i)$ and hence optimizing Theorem (\ref{PACTheorem2}) reduces to optimizing the divergence-based model weighting problem (\ref{parsimony_stacking}). Importantly, in this setting the prior weights $w$ can be chosen freely; they need not be the optimism-penalizing weights of (\ref{optimism_weights}), since the assumption of negligible overfitting removes the need for explicit optimism correction.

A natural concern with Theorem \ref{PACTheorem2} is that the sub-Gaussian assumption may seem overly restrictive. This assumption was adopted for clarity of exposition rather than necessity. Weaker tail conditions---such as sub-exponentiality of the log-likelihood terms---yield bounds of the same general form, albeit looser. In Section 5 of the appendix, we do a linear regression experiment with heavy-tailed models. The result is still that divergence-based model weighting performs better than the alternatives.

\subsubsection{The second inequality bound}

In this section we wish to provide a bound without imposing any assumptions on the tail behavior of the models. Our goal here is to provide a bound that is instructive when overfitting is substantial. To quantify the degree of overfitting of each model, define: 

\begin{equation}\label{overfitting_degree}
m_k = \frac{\sum_i -\log p_k^p(\tilde{y}_i)}{\sum_i -\log p_k^p(y_i)}
\end{equation}

The ratio $m_k$ is a measure of the extent to which model $k$ overfits on the data. Note that in general $m_k > 1$. Let $m = \min_k m_k$. In cases where all models under consideration overfit, $m$ will be much greater than 1. By applying Jensen's inequality to the left side of the inequality in Theorem 3 in \cite{Germain2016} and using (\ref{overfitting_degree}) to simplify the right side, we have:

\begin{restatable}{theorem}{PACTheoremThree}\label{PACTheorem3}
Let $m_k$ be defined as in (\ref{overfitting_degree}) and let $m = \min_k m_k$. Assume $\sum_i -\log p_k^p(y_i) > 0$ for each model $k$, and $m > 1$. For any $\delta \in (0,1]$, with probability at least $1-\delta$ we have, simultaneously for all distributions $w_k^p$:
\begin{align*}
\begin{gathered}
n\E[-\log \sum_k w^p_k p_k^p(y)] 
\le \frac{m}{m-1}\sum_k w_k^{p}\log\frac{w_k^p}{w^{op}_k}  \\
- \frac{1}{m-1}\sum_k w^p_k \log w^p_k \\
+ \log K - \frac{m}{m-1} \log \sum_k e^{-op_k}
- \log \delta + \phi_{term}.
\end{gathered}
\end{align*}
\end{restatable}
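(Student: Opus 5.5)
The plan is to start from Theorem 3 of \cite{Germain2016} with the log loss, applied to the finite model class with the \emph{flat} prior $\pi_k = 1/K$. In the form I will use, it states that with probability at least $1-\delta$, simultaneously for all $w^p$,
\[
n\sum_k w^p_k \E[-\log p^p_k(y)] \le \sum_k w^p_k\log\frac{w^p_k}{1/K} + \sum_k w^p_k\sum_i -\log p^p_k(\tilde{y}_i) - \log\delta + \phi_{term},
\]
where $\phi_{term}$ collects the exponential-moment term of the Germain et al.\ bound. That term does not depend on $w^p$, so I simply carry it along.

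The left side is handled by Jensen's inequality. Since $-\log$ is convex, pointwise in $y$ we have $-\log\sum_k w^p_k p^p_k(y) \le -\sum_k w^p_k\log p^p_k(y)$. Taking expectations replaces the left side by $n\E[-\log\sum_k w^p_k p^p_k(y)]$. The first term on the right is $\sum_k w^p_k\log w^p_k + \log K$, which already accounts for the $\log K$ in the statement.

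Next, rewrite the empirical term using the overfitting ratios. Write $A_k = \sum_i -\log p^p_k(\tilde{y}_i)$ and $B_k=\sum_i -\log p^p_k(y_i) > 0$, so that $A_k = m_k B_k$ and $op_k = A_k - B_k = (m_k-1)B_k$. Since $m_k \ge m > 1$, the map $x\mapsto x/(x-1)$ is decreasing on $(1,\infty)$. Because $op_k = (m_k-1)B_k>0$, this gives
\[
A_k = \frac{m_k}{m_k-1}op_k \le \frac{m}{m-1}op_k .
\]
Hence $\sum_k w^p_k A_k \le \frac{m}{m-1}\sum_k w^p_k op_k$. This is the step where the assumptions $B_k>0$ and $m>1$ are used.

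Finally, the algebra. The definition of the prior gives $op_k = -\log w^{op}_k - \log\sum_j e^{-op_j}$, and therefore
\[
\frac{m}{m-1}\sum_k w^p_k op_k = \frac{m}{m-1}\Big(\sum_k w^p_k\log\frac{w^p_k}{w^{op}_k} - \sum_k w^p_k\log w^p_k\Big) - \frac{m}{m-1}\log\sum_j e^{-op_j}.
\]
Adding the entropy term $\sum_k w^p_k\log w^p_k$ from the flat-prior KL, the coefficient on $\sum_k w^p_k \log w^p_k$ becomes $1-\frac{m}{m-1} = -\frac{1}{m-1}$. This produces exactly the displayed right-hand side.

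The main obstacle is the bookkeeping around which sample plays which role. The Germain bound needs $\tilde{y}$ to be independent of the fitted predictors, and the $m_k$ are themselves random. I would state the result conditionally on the fitted models and treat $\phi_{term}$ as absorbing the moment term, since that term is independent of $w^p$.
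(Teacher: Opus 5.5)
Your proof is correct and follows the paper's route. Both arguments take the Germain et al.\ bound with $c=1$ and the flat prior, apply Jensen on the left, use $A_k = m_k B_k \ge m B_k$, and rewrite via $op_k = -\log w^{op}_k - \log\sum_j e^{-op_j}$. The only difference is that you state the key step directly as $\sum_k w^p_k A_k \le \frac{m}{m-1}\sum_k w^p_k op_k$, whereas the paper reaches the same inequality through the in-sample Jensen gap $J_{IN}$, which cancels in the end.
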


Here, $\phi_{term} = \log \frac{1}{K}\sum_k \E[e^{\sum_i \log p^p_k(\tilde{y}_i) - n\E[\log p^p_k(y)]}]$, and does not depend on $w^p$. When $m$ is large (i.e., the models substantially overfit), we get, approximately:

\begin{align*}
\begin{gathered}
n\E[-\log \sum_k w^p_k p_k^p(y)] \le \sum_k w_k^{p}\log\frac{w_k^p}{w^{op}_k} + C
\end{gathered}
\end{align*}

Where $C$ collects terms that do not depend on $w^{p}$. Hence, when overfitting is substantial, we can estimate the optimal weights for $n\E[-\log \sum_k w^p_k p_k^p(y)]$ by simply optimizing $\sum_k w_k^{p}\log\frac{w_k^p}{w^{op}_k}$. This helps explain why both divergence-based model weighting (\ref{parsimony_stacking}) and negative exponentiated model weighting perform well when the sample size is very small, since both (\ref{parsimony_stacking}) and (\ref{AIC_weightings}) include, and will tend to be dominated by, $\sum_k w_k^{p}\log\frac{w_k^p}{w^{op}_k}$ in the small-sample case.

\subsubsection{An asymptotic result}

A natural concern is that divergence-based model weighting might systematically overfit, since it relies on the same data both to fit the models and to construct the prior model weights. If this were the case, the procedure could fail to converge to the ideal objective (\ref{ideal_target}). The following result shows, however, that if the methods used to fit each of the models individually converge, then the the empirical objective (\ref{empirical_target2}) converges to the ideal objective (\ref{ideal_target}) as the sample size grows. The proof relies on the log-sum inequality and is given in the supplementary materials.

\begin{restatable}{theorem}{ConvergenceTheorem}\label{ConvergenceTheorem}
For $k = 1, 2, \ldots, K$, let $p_k^p$ be a predictive distribution that results from fitting model  $\mathcal{M}_k$ to data  $D = \{y_1, y_2, \ldots, y_n\}$ that are distributed i.i.d. according to some distribution $Q$. Let $\tilde{D} = \{\tilde{y_1}, \tilde{y_2}, \ldots, \tilde{y_n}\}$ also be i.i.d. from $Q$, let $op_k  = -\sum_i \log p^p_k(\tilde{y}_i) + \sum_i\log p^p_k(y_i)$ be the optimism of each model. Let $S^K$ be the set of all probability functions with $K$ components. Suppose each $p_k^p$  converges in probability to a distribution $p_k^{*}$ in the sense that $\frac{1}{n}\sum_i|[\log p_k^{*}(y_i) - \log p_k^{p}(y_i)]| \xrightarrow{P} 0$ and $\frac{1}{n}\sum_i|[\log p_k^{*}(\tilde{y}_i) - \log p_k^{p}(\tilde{y}_i)]| \xrightarrow{P} 0$. Suppose also that all relevant integrals are finite. Then the following holds for all $w^p \in S^K$: 
\begin{align*}
\begin{gathered}
\frac{1}{n}|\sum_k w^p_k\log w^p_k + \sum_k w_k^{p}op_k - \sum_i \log{\sum _k  w_k^pp_k^p(y_i)} \\
-\E[-\log{\sum _k  w_k^pp_k^p(y)}]|  \xrightarrow{P} 0
\end{gathered}
\end{align*}
\end{restatable}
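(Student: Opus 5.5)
The approach is to use the log-sum inequality to sandwich the empirical objective between two quantities that both converge to the ideal objective.

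\textbf{Step 1: rewrite the objective.} Dividing by $n$, the term $\frac1n\sum_k w^p_k\log w^p_k$ is bounded in absolute value by $\frac{\log K}{n}$, so it vanishes deterministically. Substituting the definition of $op_k$ gives
\[
\sum_k w^p_k op_k = \sum_i\sum_k w^p_k\left[\log p^p_k(y_i) - \log p^p_k(\tilde y_i)\right].
\]
The quantity to control is therefore
\[
A_n := \frac1n\sum_i\Big[\sum_k w^p_k\log p^p_k(y_i) - \log\sum_k w^p_k p^p_k(y_i)\Big] - \frac1n\sum_i\sum_k w^p_k\log p^p_k(\tilde y_i),
\]
and we must show $A_n \xrightarrow{P} \E[-\log\sum_k w^p_k p^p_k(y)]$.

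\textbf{Step 2: replace fitted predictives by their limits.} Each $\log p^p_k$ appearing linearly can be swapped for $\log p^*_k$ at a cost of $o_P(1)$, directly by the convergence hypothesis. For the mixture term we need a Lipschitz-type control. By the log-sum inequality (equivalently, since $\log$ of a mixture lies between the min and max of the component logs), for every $i$
\[
\Big|\log\sum_k w_k a_k - \log\sum_k w_k b_k\Big| \le \max_k|\log a_k - \log b_k| \le \sum_k |\log a_k - \log b_k|.
\]
Averaging over $i$ then shows that swapping $p^p_k$ for $p^*_k$ inside the mixture also costs $o_P(1)$.

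\textbf{Step 3: apply the law of large numbers to fixed functions.} After Step 2 every summand is a fixed function of a single i.i.d. observation, and the integrals are finite by assumption. The weak law of large numbers therefore gives
\[
A_n \xrightarrow{P} \E\Big[\sum_k w^p_k\log p^*_k(y) - \log\sum_k w^p_k p^*_k(y)\Big] - \E\Big[\sum_k w^p_k\log p^*_k(y)\Big],
\]
where the second expectation comes from the $\tilde y$-sample. The two $\sum_k w^p_k\log p^*_k$ expectations cancel, leaving $\E[-\log\sum_k w^p_k p^*_k(y)]$.

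\textbf{Step 4: return the target to $p^p_k$.} The target in the theorem involves $p^p_k$ rather than $p^*_k$. To pass back, interpret the expectation $\E[-\log\sum_k w^p_k p^p_k(y)]$ as conditional on the fitted models and bound its difference from the limit by Step 2's inequality, giving $\E\sum_k|\log p^p_k(y) - \log p^*_k(y)|$.

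\textbf{Main obstacle.} Showing this last term is $o_P(1)$ is the hardest step. The hypothesis only controls empirical averages over $y_i$ and $\tilde y_i$, not the population expectation. I would handle it by using the $\tilde D$ hypothesis: $\tilde D$ is a fresh sample independent of the fit, so its empirical average estimates the population quantity. Where that argument is not airtight, I would invoke the ``relevant integrals finite'' assumption, reading the hypothesis as $L^1(Q)$ convergence in probability.
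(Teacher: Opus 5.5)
Your Steps 1--3 are essentially the paper's argument. The paper also uses the triangle inequality and disposes of $\frac1n\sum_k w^p_k\log w^p_k$ deterministically. It shows $\frac1n\sum_k w^p_k op_k\xrightarrow{P}0$ by swapping $p^p_k$ for $p^*_k$ and applying the WLLN to the fixed functions $\log p^*_k$. It swaps $p^p_k$ for $p^*_k$ inside the mixture with the log-sum inequality, and your max-bound is the same estimate.

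\textbf{The gap is Step 4.} You locate it correctly but do not close it. Your first remedy is exactly the paper's final step. The paper asserts $|\E[-\log\sum_k w^p_k p^p_k(y)]-\frac1n\sum_i -\log\sum_k w^p_k p^p_k(\tilde y_i)|\xrightarrow{P}0$ ``by the Weak Law of Large Numbers.'' Conditionally on $D$ the $\tilde y_i$ are i.i.d., but the integrand changes with $n$. This is therefore a triangular-array LLN, and it needs some uniformity. Examples are uniform integrability, or $\mathrm{Var}_Q(\log\sum_k w^p_k p^p_k(y)\mid D)=o_P(n)$, under which conditional Chebyshev works. The empirical-average hypothesis gives no such control. ``All relevant integrals are finite'' gives finiteness for each $n$, not uniformity in $n$.

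\textbf{The stated hypotheses cannot close Step 4, because the statement itself fails.} Take $K=1$, $Q=\mathrm{Unif}[0,1]$ and $p^*_1\equiv 1$. Let the fitted density (which may ignore the data) be $p^p_1=e^{-n^2}$ on $B_n=[0,n^{-3/2}]$ and $p^p_1=c_n$ on $(n^{-3/2},1]$, where $c_n\to 1$ normalizes it.
\begin{itemize}
\item Some $y_i$ or $\tilde y_i$ lands in $B_n$ with probability at most $2n^{-1/2}$, so both empirical hypotheses hold.
\item For $K=1$ the normalized objective is exactly $\frac1n\sum_i -\log p^p_1(\tilde y_i)$. With probability tending to one this equals $-\log c_n\to 0$.
\item Yet $\E[-\log p^p_1(y)]=n^{1/2}-(1-n^{-3/2})\log c_n\to\infty$.
\end{itemize}

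So an additional assumption is unavoidable. Your $L^1(Q)$ version, $\E_Q[\,|\log p^p_k(y)-\log p^*_k(y)|\mid D]\xrightarrow{P}0$, is a good choice. Step 4 then follows at once from your Step 2 inequality, and by conditional Markov it even implies the $\tilde D$ hypothesis. With it your proof is complete. State it as an added hypothesis, though, not as a reading of the one given.
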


This result shows that divergence-based model weighting---in contrast to negative exponentiated model weighting, but similarly to model stacking---has the nice asymptotic property of converging towards the ideal (\ref{ideal_target}).

\section{EXPERIMENTAL RESULTS}
The following two subsections apply divergence-based model averaging in simulations and on data sets. Section B of the appendix applies the method in a Bayesian context and compares it to Bayesian stacking and Pseudo-BMA+ (Bayesian negative exponentiated model weighting) \citep{Yuling2018}.

\subsection{Linear Regression Simulation Experiment}

To show how divergence-based model weighting works in practice, we consider a linear regression simulation experiment, where the true distribution is of the following form:

\begin{equation}
Y = \mathbf{X}\beta + \alpha + \epsilon
\end{equation}

Here, $\beta$ is a 20-dimensional vector, $\alpha$ is an intercept, and $\epsilon$ is a normally distributed error term. We consider two possible ways of generating a precise data-generating distribution. In the first, non-sparse setting, the true values of the parameters of the data-generating distribution are determined as follows: the components of $\beta$ are generated independently from $\mathcal{N}(\mu  = 0, sd =  0.5)$; $\alpha$ is generated from $\mathcal{N}(\mu = 0, sd = 2)$; and $\epsilon$ is generated from $\mathcal{N}(0, 5)$. In the second, sparse setting, the values of the parameters are set in the same way, except that half of the components of the $\beta$ vector are randomly set to 0. To generate the design matrix, we also consider two possible scenarios. In the first scenario, all the entries of the design matrix $\mathbf{X}$  are generated independently from  $\mathcal{N}(\mu = 0, sd = 1)$. In the second scenario, the columns are set to have pairwise correlations of 0.5. In total, then, we consider four different types of data-generating distribution. 

Next, we create a set of 10 predictive models, where each model is constructed randomly in the following way: first, we select (with uniform probability) a number $m$ between one and five; next, $m$ predictors are randomly selected from the design matrix and are joined with an intercept term to form a normal, linear regression model. Of course, in practice we would not want to randomly construct our models. However, the purpose of this section is to give a fair assessment of how divergence-based model weighting performs given an arbitrary model space.\footnote{See the R code in the supplementary material for the precise procedure used to generate the models.} 

We generate a training set with a sample size of $n$  (ranging from 10 to 200) and a test set with 200 data points. We use maximum likelihood estimation to fit each of the models on the training set and then combine the predictions from the 10 fitted models on the test set by using the following methods:  (1) divergence-based model weighting, (2) Negative exponentiated model weighting with 5-fold CV, (3) model stacking with the log score. 

Finally, we calculate the root mean squared error (RMSE) of each of the model averaging methods on the test set. To get a sense of how the various model averaging methods perform across a range of model spaces and given different data-generating distributions, the entire preceding procedure is repeated 1000 times with different randomly created data-generating distributions and model spaces, and the results averaged. The results are shown in Figure \ref{frequentist_linear_regression}.

\begin{figure}[h]
        \centering
        \includegraphics[width=\columnwidth]{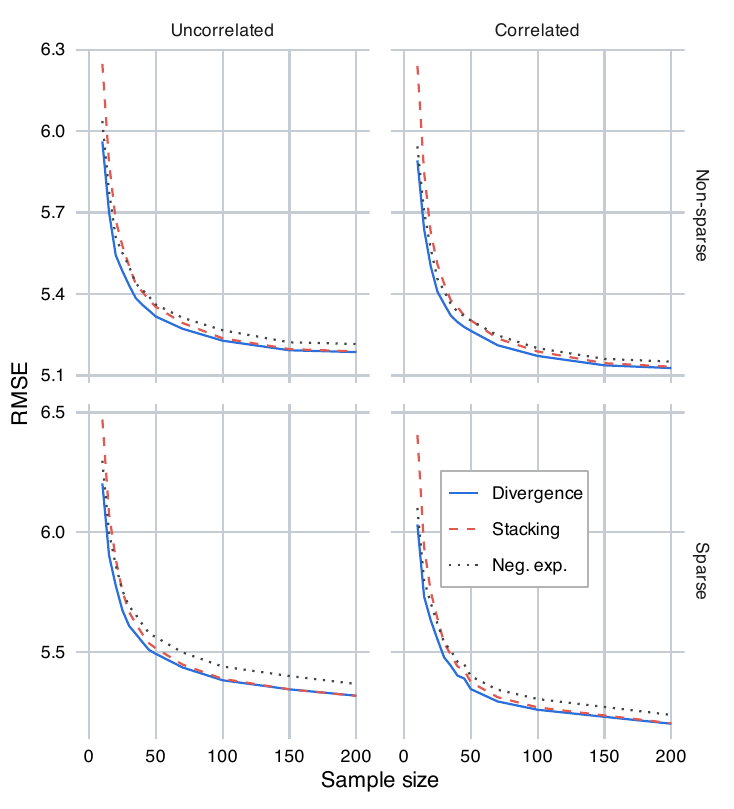}
    \caption{RMSE of model-weighting methods with various kinds of data-generating distribution. The standard error of each point is less than 0.04. \label{frequentist_linear_regression}}
\end{figure}

In this simulation, stacking and divergence-based weighting are asymptotically equal and better than Akaike style negative exponentiated weighting. However, for very small sample sizes, divergence-based weighting and Akaike weighting both perform substantially better than stacking. These results are in line with the claims we made in Section 2.

An important question concerns the stability of the model weights, especially because model optimism is estimated via cross-validation. Weight stability is also important for interpreting comparative predictive accuracy between models. To investigate this issue, we fix a ground-truth data-generating process and a set of models, and then repeat the simulation multiple times to examine how the estimated weights vary across runs. Figure \ref{weights_stability} shows, for each sample size, the standard deviation of each weight component (averaged across components) over 1000 runs.

\begin{figure}[h]
        \centering
       \includegraphics[width=\columnwidth]{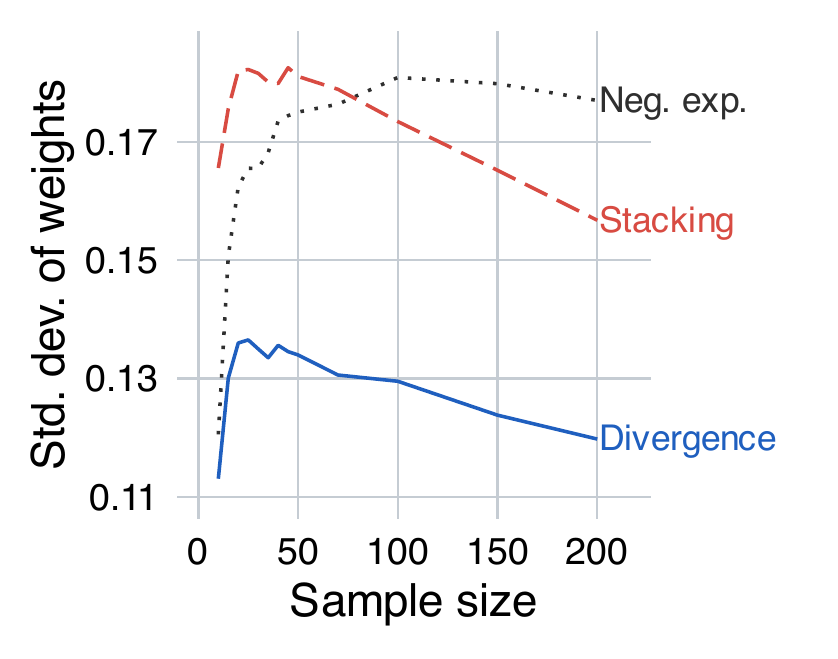}
    \caption{Standard deviation of model weights across 1000 simulation runs for different sample sizes. Lower values indicate more stable weights.\label{weights_stability}}
\end{figure}

Naturally, the precise values in Figure \ref{weights_stability} depend on the particular data-generating process and model space, but the qualitative pattern is representative.\footnote{Readers are encouraged to experiment with alternative parameter settings in the simulation.} As the figure shows, divergence-based model weighting has a systematically lower standard deviation at every sample size. Thus, in this experiment, divergence-based model weighting not only yields more accurate predictions but also produces more stable weights than stacking and negative exponentiated weighting.

\subsection{Using Divergence-Based Model Weighting to Average Predictions From Machine Learning Models}

\begin{table*}
\caption{Log scores of divergence-based model weighting (DW), stacking with log scores (LS), Akaike style negative exponentiated model weighting with log scores (NEW), stacking with a logistic meta-learner (GLM), stacking with an elastic net meta-learner (NET), and stacking with a gradient boosting meta-learner (GBM). The best score on each row is in bold (lower scores are better). The first two columns contain the number of observations ($n$) and predictors ($p$) in each data set. The last column shows the standard error across all repetitions of the mean difference between the best and second best score on each row.}\label{Table2}
\centering
\begin{tabular}{rrrrrrrrrr} 
  & $n$ & $p$ & DW & LS & NEW & GLM & NET & GBM & se  \\
 \hline
Cervical cancer &  65 & 19 & \textbf{0.270} & 0.287 & 0.272 & 0.705 & 0.298 & 0.371  & 0.0003  \\ 
Fertility & 99 & 9 &  \textbf{0.275} & 0.286 & 0.283 & 0.334 & 0.303 & 0.310  &  0.0008 \\ 
Heart failure & 299 & 11 &  \textbf{0.512} & \textbf{0.512} & 0.513 & 0.523 & 0.520 & 0.530 & 0.0004   \\ 
Heart disease 1 & 293 & 13 &  \textbf{0.387} & 0.389 & 0.394 & 0.401 & 0.400 & 0.411  & 0.0009  \\ 
Heart disease 2 & 303 & 13 &  \textbf{0.390} & 0.393 & 0.400 &  0.398 &  0.396 & 0.412 & 0.0008  \\ 
Diabetes & 520 & 16 & 0.076 & 0.077 & 0.075 & 0.066 & \textbf{0.065} &  0.073  & 0.0010 \\ 
Liver disease & 578 & 10 &  \textbf{0.494} & 0.498 &  0.503 & 0.504 & 0.504 & 0.510  & 0.0006  \\ 
Breast cancer &  682 & 9 &  \textbf{0.097} & 0.104 & 0.102 & 0.111 & 0.106 & 0.105 & 0.0009 \\ 
Australian credit & 689 & 14 & \textbf{0.322} & 0.323 & 0.330 & 0.324 & 0.323 & 0.328 & 0.0005 \\
German credit & 999 & 24 & \textbf{0.481} & 0.483 & 0.495 & 0.483 & 0.482 & 0.486 & 0.0003\\
Wine quality & 1599 & 11 & 0.418 & 0.415 & 0.418 & \textbf{0.409} & 0.411 & 0.418 & 0.0002  \\
Income prediction & 30161 & 14 & 0.326 & 0.321 & 0.345 & 0.334 & 0.335 & \textbf{0.312} & 0.0002 \\
 \hline
Mean & & &   \textbf{0.337} & 0.341 & 0.344 & 0.383 & 0.345 & 0.356 & \\ 
\hline
\end{tabular}
\end{table*}

We consider twelve data sets of varying sizes and dimensions from the UC Irvine Repository \citep{Dua2019}. For reasons of space, a complete description of the data sets is omitted, but a footnote gives the official name of each data set and a reference, if one is available.\footnote{In what follows, we refer to data sets in accordance with each data set's ``citation request'' in the UC Irvine Data Base. Not every data set has an associated publication. In descending order, the data sets in Table \ref{Table2} are: ``Cervical cancer behavior risk data set''  \citep{Sobar2016}. ``Fertility data set'' \citep{Gil2012}. ``Heart failure clinical records data set'' \citep{Chicco2020}. ``Hungarian heart disease data set'' (donated by Hungarian Institute of Cardiology. Budapest: Andras Janosi, M.D.). ``Cleveland heart disease data set'' (donated by V.A. Medical Center, Long Beach and Cleveland Clinic Foundation: Robert Detrano, M.D., Ph.D.). ``Early stage diabetes risk prediction dataset'' \citep{Islam2019}. 
``ILPD (Indian Liver Patient Dataset) Data Set''. ``Breast Cancer Wisconsin (Original) Data Set.'' ``Statlog (Australian Credit Approval) Data Set.'' ``Statlog (German Credit Data) Data Set.'' ``Wine Quality Data Set'' \citep{Cortez2009}. ``Adult/census income data set.''} 

Every data set is minimally cleaned and preprocessed: rows with missing entries are removed, factor predictors are converted to numbers, and finally all the predictors are standardized.  Each data set is divided into a training set consisting of 85$\%$ of the observations and a test set consisting of the remaining 15$\%$ of the observations. For data sets with $ n < 100$ observations, the results (and se) are averages of 1000 independent train/test splits; for data sets with $100  < n < 10000$, 500 train/test splits are used; finally, for data sets with $n > 10000$, 100 train/test splits are used.

We consider the following six models: logistic regression, elastic net regression, gradient boosting, support vector machine with a radial basis, random forest, and k-nearest neighbor. We use stratified 5-fold cross validation and the R machine learning interface Caret \citep{Kuhn2012} with the default settings (using nested cross validation) to fit the hyper-parameters of all models and to produce probabilistic predictions on the test set, and we also use 5-fold cross validation on the training data set to estimate model optimism. To average predictions from the models, we consider---as before---model stacking with the log score, divergence-based model weighting, and Akaike style negative exponentiated model weighting. But in addition, we also use the following standard ``meta-learners'' to form ``stacked'' model predictions: logistic regression, elastic net, gradient boosting. Note that all of these meta-learners incorporate regularization. The hyper-parameters of each meta-learner are tuned using 5-fold cross validation and the default settings of the CaretEnsemble \citep{Mayer2016} package.

Table \ref{Table2} gives the results. Divergence-based model weighting has the best log score on nine of the twelve data sets and has the best overall mean score. Moreover, in each pairwise comparison, divergence-based model weighting does better on at least ten of the twelve data sets. Note also that, in the four data sets that have the smallest size, negative exponentiated model weighting (NEW) performs better than all the methods that use various forms of stacking. However, divergence-based model weighting performs even better. And, by contrast with negative exponentiated model weighting, divergence-based model weighting also does well on the larger data sets. 

As seen in Table \ref{Table2}, divergence-based model weighting is not always the best-performing method. What might explain this? While we do not have a complete account, our results suggest a tentative explanation. Divergence-based model weighting assigns a separate weight to each model and is therefore linear in the model predictions, similarly to negative exponentiated model weighting and stacking with the log score (including Bayesian stacking). This linearity is advantageous when the sample size is small and is also important for interpretative purposes (especially model comparison). Moreover, Theorem 3.5 shows that divergence-based model weighting is an asymptotically optimal linear way of combining model predictions. However, for certain kinds of data, non-linear combinations can still be predictively superior. This may explain the rows of Table \ref{Table2} in which divergence-based weighting is outperformed by alternative (non-linear) methods.

\section{CONCLUSION}

Divergence-based model weighting is a theoretically well-motivated way of calculating model weights that performs well both in simulations and on data. The evidence we have presented suggests that the method performs well compared to the alternatives when sample sizes are small, while also retaining good performance with larger sample sizes. The method can be extended and possibly improved upon in several directions. First, it is possible to reformulate (\ref{parsimony_stacking}) in such a way that the model weights depend on the inputs, similar to how \cite{Sill2009} and \cite{Yao2021} extend stacking to make the stacking weights input-dependent. Second, it might be possible to find prior model weights that would be better than the optimism-penalizing weights we have proposed in this paper. We leave this as work for the future.

\section*{Acknowledgments}

Writing this paper took a tremendous amount of time and effort. I am greatly indebted to several reviewers who made the paper much stronger than it otherwise would have been. Work on this project has been supported by funding from the European Research Council (ERC) under the European Union’s Horizon Europe research and innovation programme (Grant agreement No. 101164097).

\section*{Checklist}

\begin{enumerate}

  \item For all models and algorithms presented, check if you include:
  \begin{enumerate}
    \item A clear description of the mathematical setting, assumptions, algorithm, and/or model. [Yes]
    \item An analysis of the properties and complexity (time, space, sample size) of any algorithm. [Not Applicable] The algorithm involves solving a relatively simple non-linear convex programming problem, which can be done almost instantly using the R package Rsolnp, for example. 
    \item (Optional) Anonymized source code, with specification of all dependencies, including external libraries. [Yes] This is in the supplementary materials.
  \end{enumerate}

  \item For any theoretical claim, check if you include:
  \begin{enumerate}
    \item Statements of the full set of assumptions of all theoretical results. [Yes]
    \item Complete proofs of all theoretical results. [Yes] This is provided in the appendix.
    \item Clear explanations of any assumptions. [Yes] In some cases, more complete explanations are provided in the appendix.    
  \end{enumerate}

  \item For all figures and tables that present empirical results, check if you include:
  \begin{enumerate}
    \item The code, data, and instructions needed to reproduce the main experimental results (either in the supplemental material or as a URL). [Yes]
    \item All the training details (e.g., data splits, hyperparameters, how they were chosen). [Yes] Some of this information has been relegated to the supplementary material.
    \item A clear definition of the specific measure or statistics and error bars (e.g., with respect to the random seed after running experiments multiple times). [Yes] 
    \item A description of the computing infrastructure used. (e.g., type of GPUs, internal cluster, or cloud provider).  [Not Applicable] The algorithm described in this paper can be run on an ordinary laptop from the last 10 years.

  \end{enumerate}

  \item If you are using existing assets (e.g., code, data, models) or curating/releasing new assets, check if you include:
  \begin{enumerate}
    \item Citations of the creator If your work uses existing assets. [Yes]
    \item The license information of the assets, if applicable. [Yes]
    \item New assets either in the supplemental material or as a URL, if applicable. [Not Applicable]
    \item Information about consent from data providers/curators. [Not Applicable] The data sets are from the UC Irvine Machine Learning Repository.
    \item Discussion of sensible content if applicable, e.g., personally identifiable information or offensive content. [Not Applicable]
  \end{enumerate}

  \item If you used crowdsourcing or conducted research with human subjects, check if you include:
  \begin{enumerate}
    \item The full text of instructions given to participants and screenshots. [Not Applicable]
    \item Descriptions of potential participant risks, with links to Institutional Review Board (IRB) approvals if applicable. [Not Applicable]
    \item The estimated hourly wage paid to participants and the total amount spent on participant compensation. [Not Applicable]
  \end{enumerate}

\end{enumerate}

\clearpage
\appendix

\onecolumn

\section{IMPLEMENTING DIVERGENCE-BASED MODEL WEIGHTING AND STACKING WITH THE LOG SCORE IN R}

To calculate the divergence-based model weights for a given set of models, we first need to calculate the pointwise predictive probabilities, $p_k^p(y_i)$, for each of the models and all data points, and gather the results in an $n$ by $K$ matrix. How to do this depends on which models we are fitting as well as what package we use to fit the models. For the sake of concreteness, our running example in this document will be Bayesian inference with a linear model, and we will assume that the models are fit to data using Stan  \citep{Stan2019}. In a Bayesian context, the pointwise predictive probability function $p_k^p(y_i)$ assumes the form of a pointwise posterior predictive probability (density) function \citep{Gelman2014}, $\int p_k(y_i | \theta)p(\theta|y)d\theta$, i.e., the posterior predictive probability (density) of each data point $y_i$, after the model has been fit to all the data. Since this latter expression is rather complicated, we will continue to use the generic notation $p_k^p(y_i)$, however. 

To calculate the pointwise posterior predictive probabilities for the Bayesian linear regression model using Stan, we include the following code in the ``generated quantities'' part of the Stan model code:

\begin{lstlisting}
  vector[N] log_lik;
  for (n in 1:N) log_lik[n] = normal_lpdf(y[n] | X[n, ] * beta + alpha, sigma);
\end{lstlisting}

We then extract the log-likelihoods from the posterior sample, exponentiate, and take the column averages to calculate the posterior predictive probability densities. (See \url{https://mc-stan.org/loo/reference/extract_log_lik.html} for more detailed information.) 

In addition to a matrix of pointwise posterior predictive probabilities, we also need to estimate the optimism of each model. There are several Bayesian estimates of model optimism,  e.g. the penalty term in the Widely Applicable Information Criterion \citep{Watanabe2013} or in the Deviance Information Criterion \citep{Spiegelhalter2002}, or the approximate Bayesian leave-one-out cross validation (LOO) estimate in \cite{vehtari2017}. In practice, our experience is that the estimates tend to be quite close to each other. In the next section, we use the LOO-based estimate, which \cite{vehtari2017} argue is most accurate. This estimate can be calculated easily using the LOO package \citep{LOO2020}. 

Once we have a vector of model optimism estimates (the unnormalized ``prior'') and a matrix of pointwise posterior predictive probabilities (which we call ``pointwise'' in the following), the divergence-based model weights can be calculated in R using the package RSolnp \citep{Rsolnp2015} with the following function:

\begin{lstlisting}
divergence_weights <- function(pointwise, prior){
  num_of_models <- ncol(pointwise)
  par_start <- rep(1, num_of_models)
  par_start <- par_start/sum(par_start)
  lower_bound <- rep(0, num_of_models) 
  upper_bound <- rep(1, num_of_models)
  equal <- function(x){ #Sum to 1 constraint
    sum(x)
  }
  optimizing_fn <- function(x){
    predictions <- pointwise%*%x
    score  <- -sum(log(predictions)) + 
      (sum(x*log(x)) + sum(x*prior))
    return(score)
  }
  weights <- solnp(pars = par_start, fun = optimizing_fn, 
                   eqfun = equal, eqB = 1, LB = lower_bound,
                   UB = upper_bound)$par
  return(weights)
}
\end{lstlisting}

In the next section, we compare the Bayesian implementation of divergence-based model weighting to Bayesian stacking and pseudobma+. The latter two methods are introduced in \cite{Yuling2018}, who demonstrate that both methods work better than a wide range of Bayesian model averaging methods, including standard Bayesian model averaging. Bayesian stacking is a Bayesian implementation of stacking with the log score, whereas pseudobma+ is a version of negative exponentiated model weighting that uses cross validation and bootstrapping to calculate model scores. Both methods rely on Pareto-smoothed importance sampling to approximate Bayesian leave-one-out cross validation. We refer the reader to \cite{Yuling2018} for further details.

To calculate pseudobma+ weights we use the aforementioned LOO package  \citep{LOO2020}. The LOO package can also be used to calculate stacking weights, but it tends to run into convergence issues, including in the simulations described in the next section.\footnote{Yuling Yao (personal communication) recommends using the function described in \cite{Yao2021}, which uses Stan to perform the optimization. We have tried this and confirm that it gives the same results as our function.} We therefore use the following function instead, where ``pointwise'' in this case refers to the leave-$k$-out matrix of model predictions (or the LOO approximation of this matrix):

\begin{lstlisting}
stacking_weights <- function(pointwise){
  num_of_models <- ncol(pointwise)
  par_start <- rep(1, num_of_models)
  par_start <- par_start/sum(par_start)
  lower_bound <- rep(0, num_of_models)
  upper_bound <- rep(1, num_of_models)
  equal <- function(x){
    sum(x)
  }
  optimizing_fn <- function(x){
    predictions <- pointwise%*%x
    score  <- -sum(log(predictions))
    return(score)
  }
  weights <- solnp(pars = par_start, fun = optimizing_fn, 
                   eqfun = equal, eqB = 1, LB = lower_bound,
                   UB = upper_bound)$par
  return(weights)
} 
\end{lstlisting}

\section{BAYESIAN SIMULATION EXPERIMENT}\label{Bayesian_regression}

This section replicates the linear subset regression simulations in \cite{Yuling2018}, who in turn adapt the example from \cite{breiman1996}. The set-up is as follows. First, covariates $X_1, X_2, \ldots, X_{15}$ are independently generated from $\mathcal{N}(5, 1)$. Next, $\alpha_j$ coefficients are calculated in the following manner, for $j = 1, 2, \ldots, 15$:

\begin{equation}
\alpha_j = 1_{|j-4| < h}(h - |j - 4|)^2 + 1_{|j-8| < h}(h - |j - 8|)^2 1_{|j-12| < h}(h - |j - 12|)^2
\end{equation}

Where $h$ is a parameter that determines how many of the coefficients are ``strong''. Following \cite{Yuling2018}, we put $h = 5$, which yields 15  ``weak'' coefficients. Next, $\beta_j$ coefficients are defined as follows: $\beta_j = \gamma\alpha_j$, where $\gamma$ is chosen such that the standard deviation of $\sum_j{\beta_jX_j}$ is equal to 2. We also generate errors $\epsilon$ independently from $\mathcal{N}(0, 1)$. Finally, we generate $Y$ values as follows:

\begin{equation}
Y = \beta_1X_1 + \ldots \beta_{15}X_{15} + \epsilon\label{generating_distribution}
\end{equation}

Again following \cite{Yuling2018}, we consider two different model spaces. The first model space consists of all models of the form $\mathcal{M}_k: Y \sim \mathcal{N}(\beta_kX_k, \sigma^2)$ (i.e., each model contains a single predictor) and the second model space consists of all models of the form $\mathcal{M}_k: Y \sim \mathcal{N}(\sum_k\beta_jX_j, \sigma^2)$ (the models are increasingly larger subsets of the set of all predictors).  For each model in both model spaces we use the following priors: $\beta_k \sim \mathcal{N}(0, 10)$ and $\sigma \sim$ Gamma$(0.1, 0.1)$. Note that neither of these model spaces (particularly not the first one) would realistically be used in practice. However, the two model spaces are still useful for the purposes of evaluating model averaging techniques since they represent two extremes: the case where all the models under consideration are radically misspecified, and the case where one of the models is true.

We generate a training data set from the true data distribution \ref{generating_distribution} consisting of $n$ data points, where $n$ ranges from 5 to 200, and we generate a test data set consisting of 200 data points and calculate the mean log predictive density of each model averaging method. We repeat this simulation 100 times and average the results to estimate the expected mean log predictive densities.

\begin{figure}[h]
        \centering
        \includegraphics[width=0.75\textwidth]{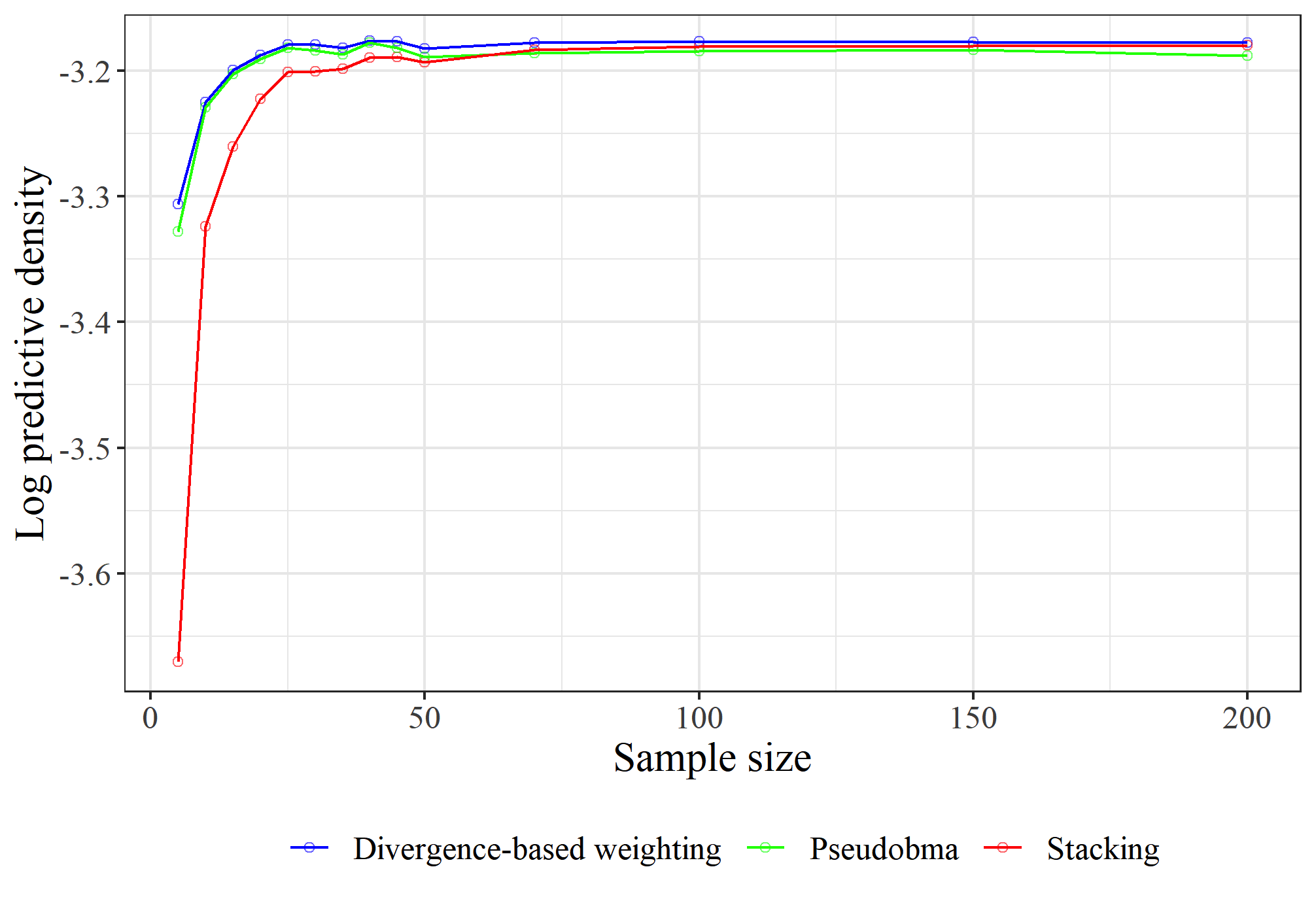} 
    \caption{First simulation: each model consists of a single predictor.\label{stan_regression_simulation_1}}
\end{figure}

\begin{figure}[h]
    \centering
        \includegraphics[width=0.75\textwidth]{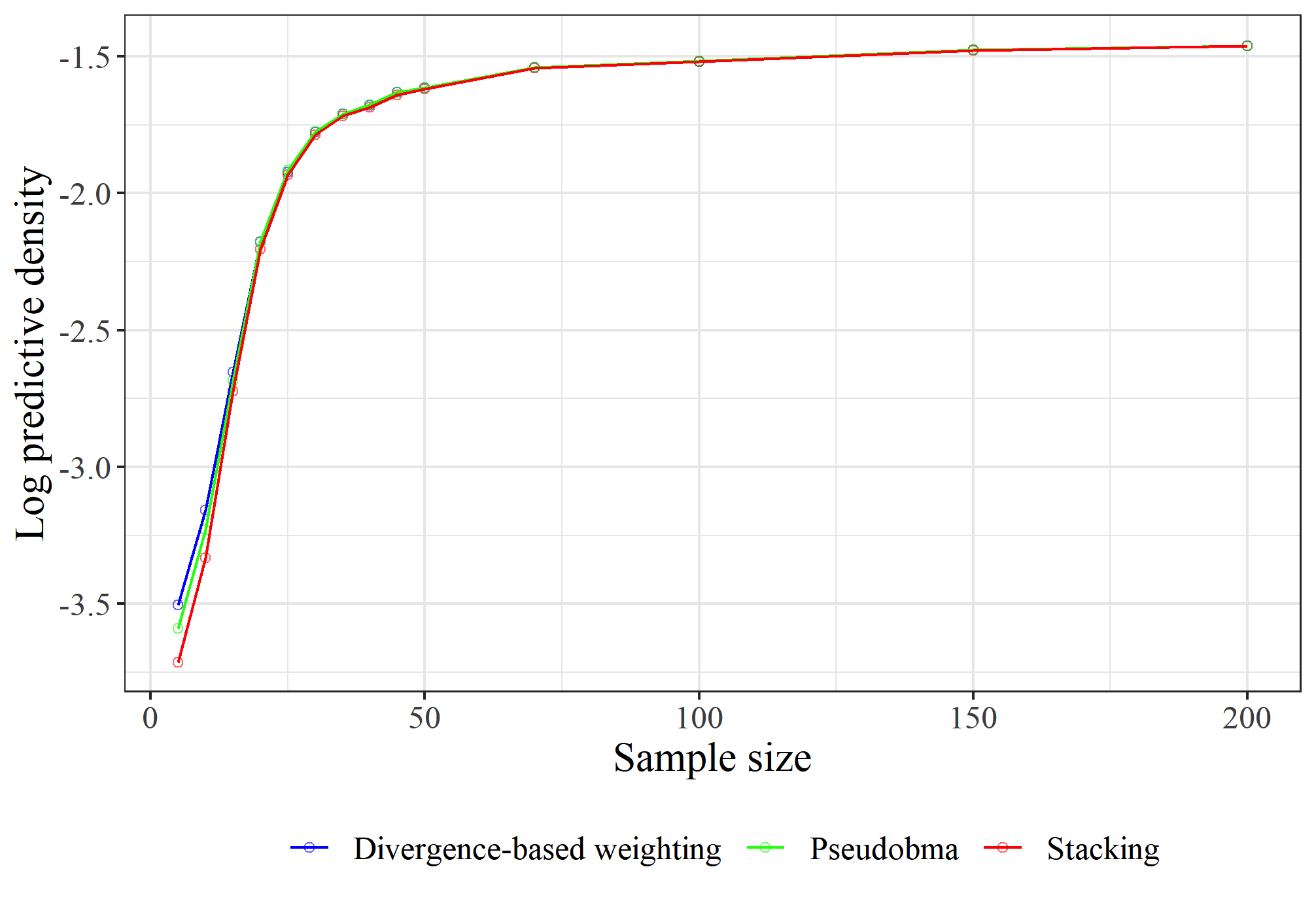} 
        \centering
    \caption{Second simulation: the models are increasingly larger subsets of the set of all predictors.\label{stan_regression_simulation_2}}
\end{figure}

Figure \ref{stan_regression_simulation_1} compares divergence weighting, Bayesian stacking, and pseudobma+ on the first model space. Pseudobma+ is better than Bayesian stacking for sample sizes smaller than 100, but Bayesian stacking is asymptotically better than pseudobma+.\footnote{Our results here differ somewhat from those reported in \cite{Yuling2018}. Comparing Figure 3 in \cite{Yuling2018} with our Figure \ref{stan_regression_simulation_1}, it is clear that the graphs for stacking are very similar, but pseudobma+ performs much better in our simulation than in \citeapos{Yuling2018}. We have not been able to determine the cause of this discrepancy.} Divergence-based model weighting  dominates both pseudobma+ and Bayesian stacking for small sample sizes in this experiment, and asymptotically behaves like Bayesian stacking. 

Figure \ref{stan_regression_simulation_2} compares the model weighting methods on the second model space. In this case, all three methods are about equally accurate for sample sizes larger than 25 and they all assign a probability of approximately 1 to the true model for large sample sizes. However, for very small sample sizes, divergence-based weighting is slightly more accurate than the alternatives.

\section{PROOFS OF THEOREMS}\label{proofs}

\subsection{Proof of Theorem 3.1 (the characterization theorem) from the main document}

For convenience, let us restate the boundary condition and the theorem. The boundary condition is:

\begin{align}\label{boundary_condition}
\begin{gathered}
\arg\min_{w^p \in V^K} c\sum_k w_k^{p}f(\frac{w_k^{op}}{w_k^{p}}) - \sum_i \log\sum _k  w_k^pp_k^p(y_i) \\
= \arg\min_k  \sum_i -\log p^p_k(y_i) + op_k 
\end{gathered}
\end{align}

Where $\sum_k w_k^{p}f(\frac{w_k^{op}}{w_k^{p}})$ is an $f$-divergence with a real analytic generator $f$. I.e., $f$ is a real analytic convex function such that $f(1) = 0$ and $\lim_{w \to 0} wf(\frac{1}{w}) = 0$.

And here is the theorem:

\CharacterizationTheorem*

\begin{proof}

If  we have $w_k = 1$ for some $k$, then $c\sum_k w_k^{p}f(\frac{w_k^{op}}{w_k^{p}}) - \sum_i \log\sum _k  w_k^pp_k^p(y_i) $ reduces to $cf(w_k^{op}) - \sum_i \log p_k^p(y_i)$. Hence, putting $S_k = -\sum_i \log p_k^p(y_i)$, the boundary condition (\ref{boundary_condition}) reduces to:

\begin{align}\label{boundary_condition2}
\begin{gathered}
\arg\min_k cf(w_k^{op}) + S_k \\
= \arg\min_k  op_k + S_k
\end{gathered}
\end{align}

By the definition of $w_k^{op}$, we have: $op_k = -\log w_k^{op} + Z$, where $Z = -\log \sum_j e^{-op_j}$ does not depend on $k$ and is therefore irrelevant in the optimization. Hence, we have:

\begin{align}\label{boundary_condition3}
\begin{gathered}
\arg\min_k cf(w_k^{op}) + S_k \\
= \arg\min_k  -\log w_k^{op} + S_k
\end{gathered}
\end{align} 

The fact that this holds in all situations means it holds for all ${w}^{op} \in S^K$ and all $S \in \mathbb{R}^K$. Now suppose there exist ${w'}^{op} \in S^K$ and all ${S'} \in \mathbb{R}^K$ such that there are indices $m$ and $n$ for which $cf({w'_{m}}^{op}) + {S'}_{m} \ge cf({w'_{n}}^{op}) + {S'}_{n}$ but $-\log {w'_{m}}^{op} + {S'}_{m} < -\log {w'_{n}}^{op} + {S'}_{n}$. Then, since we are free to vary $S$, we can create a new ${S''}$ with the same $m$ and $n$ indices as ${S'}$, but with all other entries of ${S''}$ very large, such that 
\[
m = \arg\min_k \big(-\log {w''_{k}}^{op} + {S''}_{k}\big) 
\quad\text{and}\quad 
n = \arg\min_k \big(cf({w'_{k}}^{op}) + {S'}_{k}\big),
\]
which contradicts (\ref{boundary_condition3}). Hence, there cannot exist such indices $m$ and $n$, and we conclude that (\ref{boundary_condition3}) entails the following for all indices $m$ and $n$:

\begin{align}\label{boundary_condition4}
\begin{gathered}
cf(w_m^{op}) + S_m \ge cf(w_n^{op}) + S_n  \iff  -\log w_m^{op} + S_m \ge -\log w_n^{op} + S_n
\end{gathered}
\end{align} 

Because $T = S_n - S_m$ can range freely in $\mathbb{R}$, \ref{boundary_condition4} entails that the following holds for all $T \in \mathbb{R}$:

\begin{align}\label{boundary_condition5}
\begin{gathered}
cf(w_m^{op}) - cf(w_n^{op}) \ge T  \iff  \log w_n^{op}  - \log w_m^{op}  \ge T
\end{gathered}
\end{align} 

But this entails that $cf(w_m^{op}) - cf(w_n^{op}) = \log w_n^{op}  - \log w_m^{op}$. Since it holds for all $w^{op}$, it must, in particular, hold when $w_n^{op} = 1$, and thus we get (since $f(1) = 0$ for f divergences:

\begin{align}\label{boundary_condition6}
\begin{gathered}
cf(w_m^{op}) = -  \log w_m^{op} 
\end{gathered}
\end{align} 

Since this holds for all possible values of $w_m^{op}$, and $w_m^{op}$ can be any number in the interval $(0, 1)$, $cf(x)$ is identical to $\log x$ in the interval $(0, 1)$. We now use the fact that $f$ is real analytic to conclude that $cf(x)$ is identical to $\log x$ for all $x$.\footnote{But note that what we technically need is the much weaker condition that identity on $(0, 1)$ entails identity on the whole domain.}  We can plug this into the boundary condition (\ref{boundary_condition}) and get:

\begin{align}\label{boundary_condition_final}
\begin{gathered}
\arg\min_{w^p \in V^K} \sum_k w_k^{p}\log\frac{w_k^{p}}{w_k^{op}} - \sum_i \log\sum _k  w_k^pp_k^p(y_i) \\
= \arg\min_k  \sum_i -\log p^p_k(y_i) + op_k 
\end{gathered}
\end{align}

And this establishes what we intended to show, i.e., $c = 1$ and the f-divergence is the KL divergence.

\end{proof}

\subsection{Proof of Theorem 3.2 and Theorem 3.3 from the main document}

let us first define the key quantities. First, we define the pointwise ``out-of-sample'' Jensen gap as follows:

\begin{equation}
J_{out}(w^p) = -\sum_k w^p_k \log p_k^p(\tilde{y}_i) + \log \sum_k w^p_k p_k^p(\tilde{y}_i) 
\end{equation}

Next, we assume that the individual log losses are sub-Gaussian: there exists $s > 0$ such that for all $\lambda \in R$:

\begin{equation} \log \E[e^{{\lambda}(-\log p^p_k(\tilde{y}) - \E[-\log p^p_k(y)])} \le \frac{\lambda^2 s^2}{2} \label{sub_gaussian1}\end{equation}

We will now prove Lemma 3.2 from the main document:

\PACLemma*

\begin{proof}

First, to simplify the notation, let:

\begin{equation}
X_k = -\log p_k^p(\tilde{y})
\end{equation}

\begin{equation}
\mu_k = \E[-\log p_k^p(\tilde{y})]
\end{equation}

Then the sub-Gaussian assumption says that, for each $k$ and all $\lambda$:

\begin{equation} \log \E[e^{\lambda(X_k -  \mu_k)}] \le \frac{\lambda^2 s^2}{2}\end{equation}\label{sub_gaussian2}

Let 

\begin{equation}
L = \sum_k w^p_k X_k
\end{equation}

and 

\begin{equation}
S = \log \sum_k w^p_k e^{X_k}
\end{equation}

We will first prove that each of $L$ and $S$ is sub-Gaussian. First, since $e^x$ is convex, Jensen's inequality implies: 

\begin{equation} \E[e^{\lambda (L - \E[L])}]  = \E[e^{\sum_k w^p_k\lambda(X_k - \mu_k})] \le \sum_k w^p_k \E[e^{\lambda(X_k -  \mu_k)}] \le  \frac{\lambda^2s^2}{2} \label{sub_gaussian4}\end{equation}

Thus, $L$ is sub-Gaussian with variance $s$. As for $S$, we have that:

\begin{equation}\label{bound}
\max_ k X_k  \le S \le \max_k X_k + \log K
\end{equation}

Hence, $S$ is trapped on both sides by $\max_k -X_k$ and a constant shift of $\max_k -X_k$. It follows that $S$ is sub-Gaussian if $\max_k -X_k$ is sub-Gaussian, which it is. Indeed, we have (see Proposition 2.7.6 of \cite{vershynin2025}): 

\begin{equation}
||\max_k X_k - \E[\max_k X_k]||_{\phi_2} \le s\sqrt{\log K}
\end{equation}

More carefully, (\ref{bound}) implies:

\begin{equation}
 || (S - \max_k X_k) - E[(S - \max_k X_k)]||_{\phi_2}   \le \log K
\end{equation}

And therefore, by the triangle inequality:

\begin{equation}
 ||S - E[S]||_{\phi_2}   \le ||\max_k X_k -  \E[\max_k X_k]||_{\phi_2} + \log K = s\sqrt{\log K} + \log K
\end{equation}

Hence, $S - E[S]$ is sub-Gaussian with some variance $t$.

Finally, we consider the (per sample) Jensen gap $L + S$:

\begin{align*}
\begin{gathered} 
\log \E[e^{{\lambda}((L + S) - \E[(L + S)])}] = \E[e^{{\lambda}(L - \E[L])}e^{{\lambda}(S - \E[S])}] \\
\le \sqrt{\E[e^{{2\lambda}(L - \E[L])}e^{{2\lambda}(S - \E[S])}]}  \\
\le  0.5 \frac{(2\lambda)^2s^2}{2} + 0.5 \frac{(2\lambda)^2t^2}{2} \\
= 2s^2 + 2t^2
 \label{sub_gaussian5}
 \end{gathered}
\end{align*}

Therefore, the pointwise Jensen gap is sub-Gaussian with variance $s'' = 2s^2 + 2t^2$. Finally, summing $L + S$ over $i$ independent $\tilde{y}_i$ yields a sum of $n$ independent subgaussian variables with variance $s''$, and therefore it has a variance of $n s''$.

\end{proof}

We will now prove Theorem 3.3. We need the following slightly reworded version of corollary 4 from \cite{Germain2016} (see the reference for a complete proof):

\begin{theorem}\label{PACTheorem}
Suppose each model  $\log p_k^p(\tilde{y}_i)$ is sub-Gaussian with variance parameter $s$. Then for any prior distribution $w_k$ and for any $\delta \in (0, 1]$, with probability at least $1 - \delta$ we have for all distributions $w_k^p$ simultaneously: 
\begin{align*}
\begin{gathered}
n\E[-\sum_k w^p_k \log p_k^p(y)] \le -\sum_i \sum_k w_k^p \log p_k^p(\tilde{y}_i) \\ 
 + \sum_k w^p_k\log\frac{w_k^p}{w_k} + \log{\frac{1}{\delta}} + n\frac{1}{2}s^2
\end{gathered}
\end{align*}
\end{theorem}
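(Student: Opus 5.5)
The plan is the standard PAC-Bayesian route: a change-of-measure inequality, then Markov's inequality, then the sub-Gaussian assumption (\ref{sub_gaussian1}) to control the moment generating function. I will work throughout conditionally on the training data $y_1,\ldots,y_n$. The fitted predictors $p_k^p$ are then fixed functions, while $\tilde{y}_1,\ldots,\tilde{y}_n$ are i.i.d.\ draws independent of them. The prior $w$ must be chosen without looking at $\tilde{y}$. For this reason the optimism-penalizing weights $w^{op}$, which depend on $\tilde{y}$, are \emph{not} a legitimate choice of prior here, and the statement is for an arbitrary data-free $w$.

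\emph{Step 1 (per-model deviation).} Write $X_{k,i} = -\log p_k^p(\tilde{y}_i)$ and $\mu_k = \E[-\log p_k^p(y)]$. Define
\[
\Delta_k = n\mu_k - \sum_i X_{k,i}.
\]
The left-hand side minus the empirical term of the claimed bound is exactly $\sum_k w_k^p \Delta_k$, so it suffices to bound this quantity uniformly over $w^p$.

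\emph{Step 2 (change of measure).} The Donsker--Varadhan variational formula, which on a finite set is just Gibbs' inequality, gives for every $w^p \in S^K$ simultaneously
\[
\sum_k w_k^p \Delta_k \le \sum_k w_k^p \log\frac{w_k^p}{w_k} + \log \sum_k w_k e^{\Delta_k}.
\]
The last term does not depend on $w^p$. This is what makes the final bound hold for all posteriors at once.

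\emph{Step 3 (Markov and MGF).} The variable $\sum_k w_k e^{\Delta_k}$ is nonnegative, and $w$ is independent of $\tilde{y}$. Markov's inequality therefore gives, with probability at least $1-\delta$,
\[
\log\sum_k w_k e^{\Delta_k} \le \log\frac{1}{\delta} + \log \sum_k w_k \E\big[e^{\Delta_k}\big].
\]
By independence of the $\tilde{y}_i$,
\[
\E\big[e^{\Delta_k}\big] = \prod_i \E\big[e^{-(X_{k,i}-\mu_k)}\big].
\]
Applying (\ref{sub_gaussian1}) with $\lambda=-1$ bounds each factor by $e^{s^2/2}$, so $\E[e^{\Delta_k}] \le e^{ns^2/2}$. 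Consequently $\log\sum_k w_k \E[e^{\Delta_k}] \le ns^2/2$. Combining Steps 1--3 yields the stated inequality.

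The only delicate point is Step 3's use of independence. Markov's inequality and the factorization of the MGF require that the $p_k^p$ and $w$ be fixed with respect to the fresh sample $\tilde{y}$. I handle this by conditioning on the training data. The sub-Gaussian assumption is then read conditionally, so that $\mu_k$ is the conditional expectation. I also fix the choice $\lambda=1$ in the exponential moment rather than optimizing over $\lambda$. Optimizing $\lambda$ would require a union bound or a grid, and it is exactly the $\lambda=1$ choice that produces the $n s^2/2$ term in the statement.
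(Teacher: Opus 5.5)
Your proof is correct and is essentially the standard argument behind the PAC-Bayes result of Germain et al.\ (2016) that the paper cites without reproducing: a change of measure plus Markov's inequality at temperature $n$ in their averaged-loss normalization (your unit scaling of the unnormalized $\Delta_k$), followed by the sub-Gaussian bound $\log \sum_k w_k \mathrm{E}[e^{\Delta_k}] \le ns^2/2$. Your explicit conditioning on the training data is what makes that cited result applicable here: it fixes the $p_k^p$, makes $\tilde{y}_1,\ldots,\tilde{y}_n$ a fresh i.i.d.\ sample, and requires a prior $w$ that does not depend on $\tilde{y}$.
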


We also need the following fact about subgaussian variables: 

\begin{lemma}\label{Hoeffding}
Suppose $\sup_{w^p} \sum_i J_{out}(w^p)$ is subgaussian with variance $n s^{''}$. Then, for all $\delta \in (0, 1]$, with probability at least $1 - \delta$, the following holds for all $w^p$ simultaneously:
\begin{equation}  n\E[\sum_i J_{out}(w^p)] \le \sum_i J_{out}(w^p)] + \sqrt{2ns^{''}\log\frac{1}{\delta}}\end{equation}
\end{lemma}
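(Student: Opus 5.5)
The plan is to reduce the uniform statement to a single Chernoff bound for one scalar random variable, namely the worst-case deviation of the Jensen gap over the simplex. Write $G_n(w^p) = \sum_i J_{out}(w^p)$, with the $\tilde{y}_i$ i.i.d. Note that $\E[G_n(w^p)] = n\E[J_{out}(w^p)]$ for a single fresh point; this is how I read the left-hand side of the lemma, with the stray factor $n$ absorbed into the expectation of the sum. Define
\[
D_n = \sup_{w^p \in S^K} \big( \E[G_n(w^p)] - G_n(w^p) \big).
\]
If $D_n \le t$, then $\E[G_n(w^p)] \le G_n(w^p) + t$ for every $w^p$ at once. So it suffices to show $P(D_n > t) \le \delta$ for $t = \sqrt{2ns^{''}\log\frac{1}{\delta}}$.

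Next I would fix how the uniformity assumption is read. The lemma says ``$\sup_{w^p}\sum_i J_{out}(w^p)$ is subgaussian with variance $ns^{''}$''. I would make precise that this means the centred supremum process $D_n$ satisfies
\[
\log \E\big[e^{\lambda D_n}\big] \le \frac{\lambda^2 n s^{''}}{2}
\]
for all $\lambda \ge 0$. This is the same kind of moment-generating-function condition as (\ref{sub_gaussian1}), transferred to the supremum. Theorem \ref{PACLemma} already gives this for each fixed $w^p$, with $\E[G_n(w^p)] - G_n(w^p)$ centred. The extra content of the assumption is that the bound survives the supremum over the compact set $S^K$. Measurability of $D_n$ is not a problem: $G_n$ is continuous in $w^p$, since it is a finite sum of $\log$ of affine positive forms plus linear terms, so the supremum can be taken over a countable dense subset of $S^K$.

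With this in place, the Chernoff step is routine. By Markov's inequality applied to $e^{\lambda D_n}$,
\[
P(D_n > t) \le \exp\!\Big(-\lambda t + \frac{\lambda^2 n s^{''}}{2}\Big).
\]
Optimising at $\lambda = t/(ns^{''})$ gives $P(D_n > t) \le \exp(-t^2/(2ns^{''}))$. Setting the right-hand side equal to $\delta$ gives exactly $t = \sqrt{2ns^{''}\log\frac{1}{\delta}}$. On the complementary event, of probability at least $1-\delta$, the displayed inequality holds for all $w^p$ simultaneously.

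The main obstacle is the centring in the interpretation step, not the concentration argument. If the assumption only says that $D_n$ is subgaussian around its own mean $\E[D_n]$, then the same argument gives the bound with an extra additive $\E[D_n] \ge 0$ on the right-hand side. That quantity is an expected supremum and is not zero in general; controlling it would need a chaining or covering argument over $S^K$, using Lipschitz continuity of $G_n$ in $w^p$. I would therefore state the assumption in the mean-zero form above, which matches the lemma as written. I would also note in a remark that any such additive term does not depend on $w^p$, so it leaves the optimisation in Theorem \ref{PACTheorem2} unchanged.
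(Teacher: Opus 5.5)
Your Chernoff computation is what the paper relies on. The paper gives no proof of this lemma; it invokes it as a standard ``fact about subgaussian variables'', treating the supremum as a single mean-zero subgaussian variable. The trouble is the formalisation you adopt to make this work.

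A bound $\log\E[e^{\lambda D_n}]\le\lambda^2 ns''/2$ for all $\lambda\ge 0$ cannot hold for the supremum of a centred process except in degenerate cases. Since $J_{out}(e_k)\equiv 0$ at every vertex $e_k$ of $S^K$, you have $D_n\ge 0$ surely. (More generally, $D_n\ge \E[G_n(w_0)]-G_n(w_0)$ for any fixed $w_0$, so $\E[D_n]\ge 0$.) On the other hand, Jensen gives $\lambda\E[D_n]\le\log\E[e^{\lambda D_n}]\le\lambda^2ns''/2$, and letting $\lambda\downarrow 0$ yields $\E[D_n]\le 0$. Hence $D_n=0$ almost surely, so $G_n(w^p)\ge\E[G_n(w^p)]$ for every $w^p$. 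Because $G_n(w^p)-\E[G_n(w^p)]$ has mean zero, this forces $G_n(w^p)=\E[G_n(w^p)]$ almost surely: the Jensen gap is non-random. Under your hypothesis the conclusion therefore holds with $t=0$, and the Chernoff step proves the lemma only vacuously.

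So the step that fails is the claim that the mean-zero form ``matches the lemma as written''. It is not the literal hypothesis, which concerns $\sup_{w^p}G_n(w^p)$. That is a different variable, and its concentration controls only the value of the supremum, not the lower deviations $\E[G_n(w^p)]-G_n(w^p)$ at each $w^p$. Nor is it a non-degenerate sharpening of that hypothesis.

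The non-vacuous version is the one in your last paragraph. Assume $D_n-\E[D_n]$ is subgaussian with variance proxy $ns''$. The same Chernoff argument then gives the displayed inequality with an extra additive $\E[D_n]$. This term is strictly positive outside the degenerate case and typically of the same order $\sqrt{n}$ as the deviation term, so it genuinely requires a chaining or covering bound over $S^K$. Obtaining the lemma in exactly the stated form would require assuming $P(D_n>t)\le e^{-t^2/(2ns'')}$ for all $t\ge 0$ directly, which just restates the conclusion.

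You are right that the extra term does not depend on $w^p$, so the optimisation reading of Theorem \ref{PACTheorem2} survives. The same analysis (again $D_n\ge 0$ via the vertices) applies to the upper-tail statement $\sum_i J_{out}(w^p)\le n\E[J_{out}(w^p)]+t$. That is the direction the proof of Theorem \ref{PACTheorem2} actually uses; the lemma as stated gives the opposite one.
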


Here is the result we will prove:

\PACTheoremTwo*

\begin{proof}

Lemma \ref{Hoeffding} implies that, for any $\delta \in (0, 1]$, with probability at least $1 - \delta$, the following holds:

\begin{equation}
\begin{gathered} 
n\E[-\log \sum_k w^p_k p_k^p(y) + \sum_k w^p_k \log p_k^p(y)] \le -\sum_i\log \sum_k w^p_k p_k^p(\tilde{y}_i) + \sum_i\sum_k w^p_k \log p_k^p(\tilde{y}_i) + \sqrt{2ns^{''}\log\frac{1}{\delta}} \label{sub_gaussian5}\end{gathered}
\end{equation}

Rearranged, this implies: 

\begin{equation}
\begin{gathered} 
n\E[-\log \sum_k w^p_k p_k^p(y)] \le n\E[-\sum_k w^p_k \log p_k^p(y)] + \sum_i\sum_k w^p_k \log p_k^p(\tilde{y}_i) - \sum_i\log \sum_k w^p_k p_k^p(\tilde{y}_i) +  \sqrt{2ns^{'}\log\frac{1}{\delta}}
 \label{sub_gaussian6}\end{gathered}
\end{equation}

Theorem \ref{PACTheorem} further implies that for any prior distribution $w_k$ and for any $\delta \in (0, 1]$, with probability at least $1 - \delta$, the following holds for all distributions $w^p$ simultaneously:

\begin{equation}
\begin{gathered} 
n\E[-\sum_k w^p_k \log p_k^p(y)] + \sum_i\sum_k w^p_k \log p_k^p(\tilde{y}_i) \le  \sum_k w_k^{p}\log\frac{w_k^p}{w_k} + \log{\frac{1}{\delta}} + n\frac{1}{2}s^2
 \label{sub_gaussian7}\end{gathered}
\end{equation}

Putting \ref{sub_gaussian6} and \ref{sub_gaussian7} together yields that the following holds for any prior distribution $w_k$ and for any $\delta \in (0, 1]$, with probability at least $1 - \delta$ for all distributions $w_k^p$ simultaneously:

\begin{equation}
\begin{gathered} 
n\E[-\log \sum_k w^p_k p_k^p(y)] \le  \sum_k w_k^{p}\log\frac{w_k^p}{w_k} - \sum_i\log \sum_k w^p_k p_k^p(\tilde{y}_i) + n\frac{1}{2}s^2 + \log{\frac{1}{\delta}} +  \sqrt{2ns^{''}\log\frac{1}{\delta}}
 \label{sub_gaussian5}\end{gathered}
\end{equation}

\end{proof}

\subsection{Proof of Theorem 3.4 from the main document}

We will use the following theorem \citep{Germain2016}: 

\begin{theorem}\label{PACTheorem_general}
For any prior distribution $w_k$ and for any $\delta \in (0, 1]$ and $c > 0$, with probability at least $1 - \delta$ we have for all distributions $w_k^p$ simultaneously: 
\begin{align*}
\begin{gathered}
n\E[-\sum_k w^p_k \log p_k^p(y)] \\
\le  - \sum_i\sum_k w^p_k \log p_k^p(\tilde{y}_i)  + \frac{1}{c}\sum_k w_k^{p}\log\frac{w_k^p}{w_k} \\
-\frac{1}{c}\log \delta + \phi_{term}
\end{gathered}
\end{align*}
\end{theorem}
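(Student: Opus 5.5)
The plan is the standard two-step PAC-Bayesian argument: a change of measure (Donsker--Varadhan) inequality followed by Markov's inequality. Throughout I condition on the training data $y_1,\ldots,y_n$, so that the fitted predictors $p_k^p$ are fixed. The only randomness is then in the fresh sample $\tilde{y}_1,\ldots,\tilde{y}_n$, which is what makes the expectation $\E$ in the statement meaningful.

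For each model $k$, define the generalization gap
\[
\Delta_k = n\E[-\log p_k^p(y)] - \sum_i -\log p_k^p(\tilde{y}_i).
\]
The first step is the Donsker--Varadhan variational inequality on the finite model space. For any prior $w$, any posterior $w^p$ and any $c>0$,
\[
c\sum_k w_k^p \Delta_k \le \sum_k w_k^p \log\frac{w_k^p}{w_k} + \log\sum_k w_k e^{c\Delta_k}.
\]
On a finite set this is elementary: write $\sum_k w^p_k c\Delta_k = \sum_k w^p_k \log\big(\tfrac{w_k}{w_k^p} e^{c\Delta_k}\big) + \mathrm{KL}(w^p\|w)$, and apply Jensen's inequality (concavity of $\log$) to the first sum. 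Crucially, the inequality holds deterministically and simultaneously for all $w^p$. This is what yields the ``for all distributions $w_k^p$ simultaneously'' clause without any union bound.

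The second step controls the only random quantity, $\Xi = \sum_k w_k e^{c\Delta_k}$, which does not depend on $w^p$. It is nonnegative, so Markov's inequality gives $\Xi \le \E[\Xi]/\delta$ with probability at least $1-\delta$. Hence, on that event,
\[
\log \Xi \le -\log\delta + \log\E\Big[\sum_k w_k e^{c\Delta_k}\Big].
\]
Substituting into the first step and dividing by $c$ gives exactly the claimed bound, with $-\tfrac{1}{c}\log\delta$ and with
\[
\phi_{term} = \tfrac{1}{c}\log\sum_k w_k \E\big[e^{c\Delta_k}\big].
\]
For the uniform prior $w_k=1/K$ and $c=1$, this reduces to the expression for $\phi_{term}$ given after Theorem~\ref{PACTheorem3}. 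Finally, I would rearrange $c\sum_k w^p_k\Delta_k$ into the form $n\E[-\sum_k w^p_k\log p_k^p(y)] \le -\sum_i\sum_k w^p_k\log p_k^p(\tilde{y}_i) + \ldots$ displayed in Theorem~\ref{PACTheorem_general}.

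I expect the main obstacle to be bookkeeping rather than any deep step. First, the $\phi_{term}$ in the statement must match the one produced by the argument, including the $1/c$ scaling and the choice of prior. Second, I must be careful that the Markov step uses an expectation over $\tilde{y}$ only, with $p_k^p$ held fixed. Otherwise the dependence of $p_k^p$ on the training data would break the argument, since $\E[-\log p_k^p(y)]$ would no longer be a constant. Beyond that, I only need finiteness of $\E[e^{c\Delta_k}]$, which I would assume, in line with the paper's standing assumption that ``all relevant integrals are finite''.
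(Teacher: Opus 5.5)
Your argument is correct and is the standard change-of-measure (Donsker--Varadhan) plus Markov proof behind the result of \cite{Germain2016} that the paper cites instead of proving, with their $\lambda$ set to $nc$ and the $K$ fitted predictors $p_k^p$ treated as fixed hypotheses evaluated on the independent sample $\tilde{y}$. Your general form $\phi_{term}=\frac{1}{c}\log\sum_k w_k\E[e^{c\Delta_k}]$ is the right one: the paper's $\phi_{term}$ (uniform weights, no $c$) coincides with it only when $c=1$ and $w_k=1/K$, which is exactly the case used in the proof of Theorem 3.4.
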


Here, $\phi_{term} = \log \frac{1}{K}\sum_k \E[e^{\sum_i \log p^p_k(\tilde{y}_i) - n\E[\log p^p_k(y)]}]$. For simplicity, let $T = -\frac{1}{c}\log \delta + \phi_{term}$

Define the following quantities: 

\begin{equation}
J_{IN} = -\sum_i\sum_k w^p_k \log p_k^p(y_i) + \sum_i\log \sum_k w^p_k p_k^p(y_i) 
\end{equation}

\begin{equation}
m_k = \frac{\sum_i -\log p_k^p(\tilde{y}_i)}{\sum_i -\log p_k^p(y_i)}
\end{equation}

$J_{IN}$ is the ``in-sample'' Jensen Gap calculated on the same data as  was used to fit the models. $m_k$ is a measure of the extent to which model $k$ overfits on the data. Note that in general $m_k > 1$. Let $m = \min_k m_k$. In cases where all models under consideration overfit, $m$ will tend to be much greater than 1. Using (\ref{PACTheorem_general}), we have (by applying Jensen's inequality on the first line and setting $c = 1$):

\begin{align}
\begin{gathered}
n\E[-\log \sum_k w^p_k p_k^p(y)]  \le n\E[-\sum_k w^p_k \log p_k^p(y)]  \\
\le - \sum_i\sum_k w^p_k \log  p_k^p(\tilde{y}_i)  + \sum_k w_k^{p}\log\frac{w_k^p}{w_k} + T \\
= - \sum_i\sum_k w^p_k \log p_k^p(y_i) + \sum_k w^p_k op_k + \sum_k w_k^{p}\log\frac{w_k^p}{w_k} + T \\
= - \sum_i\sum_k w^p_k \log p_k^p(y_i)  + \sum_k w_k^{p}\log\frac{w_k^p}{w^{op}_k} \\
- \sum_k w^p_k \log w_k - \log \sum_k e^{-op_k} + T \\
= - \sum_i\log\sum_k  w^p_k p_k^p(y_i) +  \sum_k w_k^{p}\log\frac{w_k^p}{w^{op}_k} \\
+ J_{IN} - \sum_k w^p_k \log w_k - \log \sum_k e^{-op_k} + T
\end{gathered}\label{Implication}
\end{align}

We also have:

\begin{align}
\begin{gathered}
J_{IN}  = -\sum_i\sum_k w^p_k \log p_k^p(\tilde{y}_i)  + \sum_i\log\sum_k w^p_k p_k^p(y_i)  \\
+ \sum w^p_k \log w^p_k - \sum_k w_k^{p}\log\frac{w_k^p}{w^{op}_k} + \log \sum_k e^{-op_k} \\
\ge -m\sum_i\sum_k w^p_k \log p_k^p(y_i)  + \sum_i\log\sum_k w^p_k p_k^p(y_i) \\
+ \sum w^p_k \log w^p_k - \sum_k w_k^{p}\log\frac{w_k^p}{w^{op}_k} + \log \sum_k e^{-op_k} \\
= mJ_{IN} - (m-1)\sum_i\log\sum_k w^p_k p_k^p(y_i) + \sum w^p_k \log w^p_k \\
 -\sum_k w_k^{p}\log\frac{w_k^p}{w^{op}_k} + \log \sum_k e^{-op_k}
\end{gathered}\label{Jensen_gap_inequality}
\end{align}

This implies: 

\begin{align}
\begin{gathered}
(m-1)J_{IN} \le (m-1)\sum_i\log\sum_k w^p_k p_k^p(y_i)  -\sum w^p_k \log w^p_k \\
+ \sum_k w_k^{p}\log\frac{w_k^p}{w^{op}_k} - \log \sum_k e^{-op_k}
\end{gathered}\label{Jensen_gap_inequality2}
\end{align}

Thus, combining (\ref{Implication}) and (\ref{Jensen_gap_inequality2}), we have:

\begin{align}
\begin{gathered}
n\E[-\log \sum_k w^p_k p_k^p(y)] \\
\le - \sum_i\log\sum_k  w^p_k p_k^p(y_i) +  \sum_k w_k^{p}\log\frac{w_k^p}{w^{op}_k} \\
+ J_{IN} - \sum_k w^p_k \log w_k - \log \sum_k e^{-op_k} + T\\
\le  \frac{m}{m-1}\sum_k w_k^{p}\log\frac{w_k^p}{w^{op}_k}  \\
- \frac{m}{m-1} \log \sum_k e^{-op_k} - \frac{1}{m-1}\sum w^p_k \log w^p_k \\
- \sum_k w^p_k \log w_k + T
\end{gathered}
\end{align}

When $m$ is large (i.e., the models overfit), we get, approximately (and setting $w_k = 1/K)$:

\begin{align*}
\begin{gathered}
n\E[-\log \sum_k w^p_k p_k^p(y)] \le \\
 \sum_k w_k^{p}\log\frac{w_k^p}{w^{op}_k} + \log K - \log \sum_k e^{-op_k} + T
\end{gathered}
\end{align*}

\subsection{Proof of Theorem 3.5 From the Main Document}

For convenience, we first restate Theorem 3.5 (here renumbered as Theorem \ref{ConvergenceTheorem}):

\ConvergenceTheorem*

\begin{proof} 

We need the sum-log inequality:

\begin{lemma}\label{log_sum_inequality}
Suppose $A_1, A_2, \ldots, A_K$ and $B_1, B_2, \ldots, B_K$ are sequences of positive numbers. Then:
\begin{equation*}
\log\frac{\sum_k A_k}{\sum_k B_k} \le \sum_k \frac{A_k}{\sum_j A_j}\log \frac{A_k}{B_k}
\end{equation*}
\end{lemma}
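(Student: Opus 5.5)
The plan is to reduce the inequality to a single application of Jensen's inequality for the concave function $\log$, with the normalized $A$-sequence as the weights. Set $\bar A = \sum_j A_j$, $\bar B = \sum_j B_j$ and $a_k = A_k/\bar A$. Since all $A_k>0$, the $a_k$ are positive and sum to one, so they form a probability vector on $\{1,\ldots,K\}$.

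The key step is to rewrite the reciprocal ratio on the left as an $a$-weighted average:
\begin{equation*}
\frac{\bar B}{\bar A} = \sum_k \frac{A_k}{\bar A}\cdot\frac{B_k}{A_k} = \sum_k a_k \frac{B_k}{A_k}.
\end{equation*}
Every $B_k/A_k$ is strictly positive, so $\log$ is defined at each point and at their average. Concavity of $\log$ (Jensen) then gives
\begin{equation*}
\log\frac{\bar B}{\bar A} \ge \sum_k a_k \log\frac{B_k}{A_k}.
\end{equation*}
Multiplying by $-1$ and using $-\log(x/y) = \log(y/x)$ yields $\log(\bar A/\bar B) \le \sum_k a_k\log(A_k/B_k)$, which is exactly Lemma \ref{log_sum_inequality}.

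As a cross-check, one can instead put $b_k = B_k/\bar B$. The right-hand side then equals
\begin{equation*}
\sum_k a_k\log\frac{a_k}{b_k} + \log\frac{\bar A}{\bar B},
\end{equation*}
so the lemma is equivalent to nonnegativity of the KL divergence between the probability vectors $a$ and $b$ (Gibbs' inequality). Equality holds iff $A_k/B_k$ is constant in $k$.

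There is no real obstacle here. The only thing to get right is the choice of weights: Jensen must be applied with the weights $a_k$ proportional to the $A$-sequence and to the ratios $B_k/A_k$, rather than to $A_k/B_k$. Otherwise the inequality comes out pointing the wrong way.
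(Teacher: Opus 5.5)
Your proof is correct. The identity $\sum_j B_j/\sum_j A_j=\sum_k a_k\,(B_k/A_k)$ with $a_k=A_k/\sum_j A_j$, followed by Jensen's inequality for the concave $\log$, gives exactly the stated inequality, and your KL/Gibbs reformulation is an equivalent check.

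The paper gives no proof here. It simply invokes the standard log-sum inequality; your statement is that inequality divided by $\sum_j A_j$. So your argument supplies the textbook proof the paper omits.

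One small addition is worth making. Your argument also covers zero entries in the $A$-sequence, with the convention $0\log 0=0$: restrict the sum to indices with $A_k>0$ and use $\sum_j B_j\ge\sum_{A_k>0}B_k$. The paper needs this, because it applies the lemma with $A_k=w_k^p p_k^{*}(\tilde{y}_i)$, and some $w_k^p$ may vanish.
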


We can now prove Theorem \ref{ConvergenceTheorem}. 

We start by expanding $|\frac{1}{n} \sum_k w^p_k\log w^p_k + \frac{1}{n}\sum_k w_k^{p}op_k - \frac{1}{n}\sum_i \log{\sum _k  w_k^pp_k^p(y_i)} - \E[-\log{\sum _k  w_k^pp_k^p(y)}]|$ as follows:

\begin{equation}\label{expansion}
\begin{split} 
|\frac{1}{n} \sum_k w^p_k\log w^p_k + \frac{1}{n}\sum_k w_k^{p}op_k  - \frac{1}{n}\sum_i \log{\sum _k  w_k^pp_k^p(y_i)} - \E[-\log{\sum _k  w_k^pp_k^p(y)}]| \\
\le |\frac{1}{n} \sum_k w^p_k\log w^p_k| \\
+ |\frac{1}{n}\sum_k w_k^{p}op_k| \\
 + \frac{1}{n}|\sum_i\log\sum_k w_k^pp_k^{*}(y_i) -  \sum_i\log\sum_k w_k^pp_k^{*}(\tilde{y}_i) | \\
 + \frac{1}{n}|\sum_i\log\sum_k w_k^pp_k^{*}(\tilde{y}_i) -  \sum_i\log{\sum _k  w_k^pp_k^p(\tilde{y}_i)}| \\
 + \frac{1}{n}|\sum_i\log\sum_k w_k^pp_k^{p}(y_i) - \sum_i\log\sum_k w_k^pp_k^{*}(y_i) | \\
 + |\E[-\log{\sum _k  w_k^pp_k^p(y)}] - \frac{1}{n}\sum_i[-\log{\sum _k  w_k^pp_k^p(\tilde{y}_i)}|
\end{split}
\end{equation}

Every line after the inequality sign in \ref{expansion} converges (in probability) to 0 as $n \rightarrow \infty$.  First, $|\sum_k w^p_k\log w^p_k||$ is bounded for all $w_k^p$ by $|\log K|$, so $\frac{1}{n}|\sum_k w^p_k\log w^p_k|$ simply converges to 0. 

Second, we have: 

\begin{equation}
\begin{split} 
\frac{1}{n}|\sum_k w_k^{p}op_k| = \frac{1}{n}|\sum_k w^p_k(\sum_i \log p^p_k(\tilde{y}_i) - \sum_i\log p^p_k(y_i))| \\
\le \frac{1}{n}|\sum_i\sum_k w^p_k \log p^p_k(\tilde{y}_i) - \log p^p_k(y_i))| = \frac{1}{n}|\sum_i\sum_k w^p_k(\log p^p_k(\tilde{y}_i) - \log p_k^{*}(\tilde{y}_i) + \log p_k^{*}(\tilde{y}_i) \\
- \log p_k^{*}(y_i) + \log p_k^{*}(y_i)  - \log p^p_k(y_i))| \\
\le \frac{1}{n}|\sum_i\sum_kw^p_k(\log p^p_k(\tilde{y}_i) - \log p_k^{*}(\tilde{y}_i))| \\
+ \frac{1}{n}|\sum_i\sum_k w^p_k (\log p_k^{*}(\tilde{y}_i) - \log p_k^{*}(y_i))| \\
+ \frac{1}{n}|\sum_i\sum_k w^p_k(\log p_k^{*}(y_i)  - \log p^p_k(y_i)))|
\end{split}
\end{equation}

In the last three lines, the first and third converge (in probability) to 0 by the assumption of the theorem. The second term converges (in probability) to 0 by the Weak Law of Large Numbers. Hence, $\frac{1}{n}|\sum_k w_k^{p}op_k|$ converges (in probability) to 0.

Next, $ |\frac{1}{n}\sum_i\log\sum_k w_k^pp_k^{*}(y_i) -  \sum_i\log\sum_k w_k^pp_k^{*}(\tilde{y}_i) |$ converges (in probability) to 0 by the Weak Law of Large Numbers.

We next show that $\frac{1}{n}|\sum_i\log\sum_k w_k^pp_k^{*}(\tilde{y}_i) -  \sum_i\log{\sum _k  w_k^pp_k^p(\tilde{y}_i)}|$ converges (in probability) to 0 by using the log-sum probability. 

Let $A_{ik} = \frac{w_k^pp_k^{*}(\tilde{y}_i)}{\sum_j w_j^pp_j^{*}(\tilde{y}_i)}$. Note that $0 \le A_{ik} \le 1$ and $\sum_k A_{ik} = 1$. First, assume $\sum_i\log\sum_k w_k^pp_k^{*}(\tilde{y}_i) -  \sum_i\log{\sum _k  w_k^pp_k^p(\tilde{y}_i)} \ge 0$. By the log-sum inequality, we have:

\begin{equation}
\begin{split} 
|\frac{1}{n}\sum_i\log\sum_k w_k^pp_k^{*}(\tilde{y}_i) - \frac{1}{n}\sum_i\log\sum_k w_k^pp_k^{p}(\tilde{y}_i)| \\
 =  \frac{1}{n}|\sum_i\log\frac{\sum_k w_k^pp_k^{*}(\tilde{y}_i)}{\sum_k w_k^pp_k^{p}(\tilde{y}_i)}| \\ 
 \le \frac{1}{n}|\sum_k\sum_i A_{ik}\log\frac{w_k^pp_k^{*}(\tilde{y}_i)}{w_k^pp_k^{p}(\tilde{y}_i)}| \\
  \le \frac{1}{n}\sum_k\sum_i A_{ik}|\log\frac{p_k^{*}(\tilde{y}_i)}{p_k^{p}(\tilde{y}_i)}| \\
    \le \frac{1}{n}\sum_k\sum_i |\log p_k^{*}(\tilde{y}_i) - \log p_k^{p}(\tilde{y}_i)| \\
\end{split}
\end{equation}

The last line converges (in probability) to 0 by our theorem's assumption. If $\sum_i\log\sum_k w_k^pp_k^{*}(\tilde{y}_i) -  \sum_i\log{\sum _k  w_k^pp_k^p(\tilde{y}_i)} \le 0$, we can do the same argument with the roles of $p_k^{*}(\tilde{y}_i)$ and $p_k^{p}(\tilde{y}_i)$ reversed and arrive at the same conclusion.

By exactly the same argument, we can show that $\frac{1}{n}|\sum_i\log\sum_k w_k^pp_k^{p}(y_i) - \sum_i\log\sum_k w_k^pp_k^{*}(y_i)|$ converges (in probability) to 0. 

Finally, by the Weak Law of Large Numbers $|\E[-\log{\sum _k  w_k^pp_k^p(y)}] - \frac{1}{n}\sum_i[-\log{\sum _k  w_k^pp_k^p(\tilde{y}_i)}|$ converges (in probability) to 0.

Hence, we conclude that all the terms in (\ref{expansion}) converge (in probability) to 0.

\end{proof}

\section{A third justification of divergence-based model weighting: divergence-based model weighting as a modification of Bayesian model averaging}\label{third_justification}

\cite{bissiri2016} show that Bayesian updating may be regarded as the solution to a certain optimization problem, where---speaking loosely---the goal is to have a posterior distribution over the parameter values that achieves the dual goal of being close to the prior distribution (and therefore suitably skeptical and cautious) and also assigns a high posterior probability (density) to parameter values that are predictively accurate on the data.  More formally, let $p^p(\theta)$ and $p(\theta)$ be the posterior and prior distributions over $\theta$, respectively, and let $S$ be the set of all probability distributions over $\theta$. Then \cite{bissiri2016} formulate the following optimization problem, which they show is uniquely minimized by the Bayesian posterior distribution: 

\begin{equation} \min_{p^p \in S} \int p^{p}(\theta)\log\frac{p^p(\theta)}{p(\theta)}d\theta - \sum_i {\int p^p(\theta) \log p(y_i| \theta)d\theta}\label{bissiri} \end{equation}

In fact, \cite{bissiri2016} consider a modified version of \ref{bissiri}, where $\log p(y_i|\theta)$ is replaced with a generic loss function. They argue that this is a legitimate move from a decision theoretic point of view, since \ref{bissiri} may be regarded as an expected loss that consists of two separate pieces: the loss that results from having a posterior distribution that differs from the prior distribution (quantified in the first term of \ref{bissiri} as the KL divergence \citep{kullbackleibler1951} from the posterior to the prior); and the loss that results from assigning high posterior probabilities to parameter values that have a poor fit to the data (quantified in the second term of \ref{bissiri} as the (negative) log-likelihood function averaged over the posterior). If there is reason to think that the model is misspecified, then \cite{bissiri2016} argue that there is no reason why predictive accuracy should necessarily be measured in terms of the log-likelihood. They furthermore show that when the log-likelihood is replaced with a different loss function, Bayesian updating may no longer be optimal.

Here, we also seek to modify the optimization problem in (\ref{bissiri}), but in a different direction. First, instead of considering the problem of assigning an optimal posterior distribution over a parameter in a model, we are instead interested in the problem of assigning a set of optimal posterior weights $w_k^p$ over a set of statistical or machine learning models, where---using the notation in (1)---each model has been fit to all the data, thereby producing a predictive distribution $p_k^p(y_i)$. Hence, we first reformulate (\ref{bissiri}) as follows:

\begin{equation} \min_{w^p \in S^K} \sum_k w_k^{p}\log\frac{w_k^p}{w_k} - \sum_i \sum _k  w_k^p\log p_k^p(y_i)\label{bissiri_reformulated} \end{equation}

In (\ref{bissiri_reformulated}), posterior model weights are rewarded (in the second term) to the extent that they assign a high probability to models that are individually predictively accurate (as measured by the log score). In fact, different Akaike style model weighting methods may be regarded as providing solutions to optimization problems of the form in (\ref{bissiri_reformulated}). For example, let $w_k^{AIC} \propto e^{-(c_k + \frac{c_k(c_k+1)}{n - c_k - 1})}$ be the prior model weights and let $p_k^p(y_i)  = p_k(y_i|\hat{\theta})$, where $\hat{\theta}$ is the maximum likelihood of the parameter $\theta$ of model $k$. Plugging these terms into (\ref{bissiri_reformulated}) yields the following optimization problem:

\begin{equation} \min_{w^p \in S^K} \sum_k w_k^{p}\log\frac{w_k^p}{w_k^{AIC}} - \sum_i \sum _k  \log p_k(y_i|\hat{\theta})w_k^p \label{AIC_weighting} \end{equation}

The general results in \cite{bissiri2016} now imply that the posterior model weights that optimize (\ref{AIC_weighting}) are proportional to $\prod_i p_k(y_i|\hat{\theta})e^{-(c + \frac{c(c+1)}{n - c - 1})}$, which are the small sample corrected form of the well known Akaike model weights \citep{Akaike1979}. Thus, Akaike model weighting and negative exponentiated model weighting more generally have a theoretical foundation in the general divergence framework presented in \cite{bissiri2016}. That is, we may regard negative exponentiated model weighting as  a process that involves two steps: first, formulate a set of prior model weights that guard against overfitting (e.g., the aforementioned Akaike prior weights); second, choose the posterior model weights that optimize (\ref{bissiri_reformulated}). We may therefore call the optimization problem in (\ref{bissiri_reformulated}) the ``negative exponentiated optimization problem.'' 

Note, however, that in a model weighting and averaging context, the optimization problem in  (\ref{bissiri_reformulated}) does not really capture what we are after. Indeed, in a Bayesian context, \cite{Masegosa2020} argues that optimizing (\ref{bissiri}) will often result in suboptimal predictions, and since (\ref{bissiri_reformulated}) is a minor modification of (\ref{bissiri}) there is good reason to think the same will be true for (\ref{bissiri_reformulated}). The main problem, which was pointed out by \cite{Minka2002} in the context of Bayesian model averaging, is that negative exponentiated model weighting will tend to increasingly concentrate the probability on the single best model. This is clear from  (\ref{bissiri_reformulated}), because as the amount of data increases, $\sum_i \sum _k  w_k^p\log p_k^p(y_i)$ will increasingly be dominated by the probability model that has the best log score on the data. However, what we would like is to reward posterior weights that produce accurate \emph{combinations} of model predictions. Fortunately, this can be achieved in a simple way by replacing the second term of (\ref{bissiri_reformulated}) with  $\sum_{i=1}^n -\log \sum_{k = 1}^K w_k^p{p_k^p(y_i)}$, i.e., we simply move the position of the log operator. A similar move is recommended by \cite{Morningstar2022} in a Bayesian context. Moving the log operator yields an objective function that is more sensible---given our aims---than (\ref{bissiri_reformulated}), namely the divergence-based model weighting problem (1) from the main document. The divergence-based model weighting problem formally quantifies the dual goal of having a set of posterior model weights that are close to the prior weights (and therefore suitably skeptical) while also yielding a \emph{linear combination} of model predictions that have a good fit to the data. Thus, in this way, divergence-based model weighting arises as a natural modification of negative exponentiated model weighting. Indeed, for small sample sizes, the KL divergence term---which is shared by the divergence-based model weighting optimization problem (1) and the negative exponentiated model weighting problem (\ref{bissiri_reformulated})---will be relatively influential, and hence we might expect the divergence-based model weights and the negative exponentiated model weights to be fairly similar. However, for large sample sizes, the KL divergence term will play a negligible role, and the divergence-based model weighting optimization problem (1) will be more similar to the stacking optimization problem (5) from the main document.

Given the close connection between the divergence-based optimization problem (1) and the Bayesian optimization problem (\ref{bissiri}) and its modification  (\ref{bissiri_reformulated}), we think it is natural to interpret the divergence-based model weights in a broadly Bayesian way: $w_k^p$ reflects the (relative) posterior trust we place in model $\mathcal{M}_k$ for predictive purposes. As such, $w_k^p$ should not be regarded as an uncertain estimate of some underlying ``true'' model weight, in the same way that Bayesian posterior probabilities are not typically regarded as imperfect estimates of ``true'' probabilities.

\section{Robustness checks}

The aim of this section is to do various robustness checks. First, in Section 3.1, we argued that the KL-divergence is a well-motivated penalty term, but one might naturally wonder what were to happen if we used some other penalty. For example, what if we instead decided to use the quadratic (Brier) divergence \citep{brier1950}, $\sum_k(w^p_k - w_k)^2$? Second, we argued that $c = 1$ is a natural default option, but one might be interested in knowing what happens if $c$ deviates from 1. Third, we argued for the optimism-penalizing prior (8), but one might wonder how a default option of using a flat prior would fare. 

In Figure \ref{robustness_figures}, we explore what happens in the linear regression experiment from Section 4 when we implement these variations. The top figure shows what happens when $c$ takes a value different from 1. As one might expect, $c=2$ has a better accuracy for smaller sample sizes, whereas $c=0.5$ has a slightly better accuracy for larger sample sizes. In the bottom left figure, we see what happens when we use the Brier divergence rather than KL divergence as the penalty term. The KL divergence performs substantially better. The bottom right figure compares the flat prior with the optimism-penalizing prior. The flat prior results in much worse predictions.

\begin{figure}[h]
        \centering
        \includegraphics[width=0.8\textwidth]{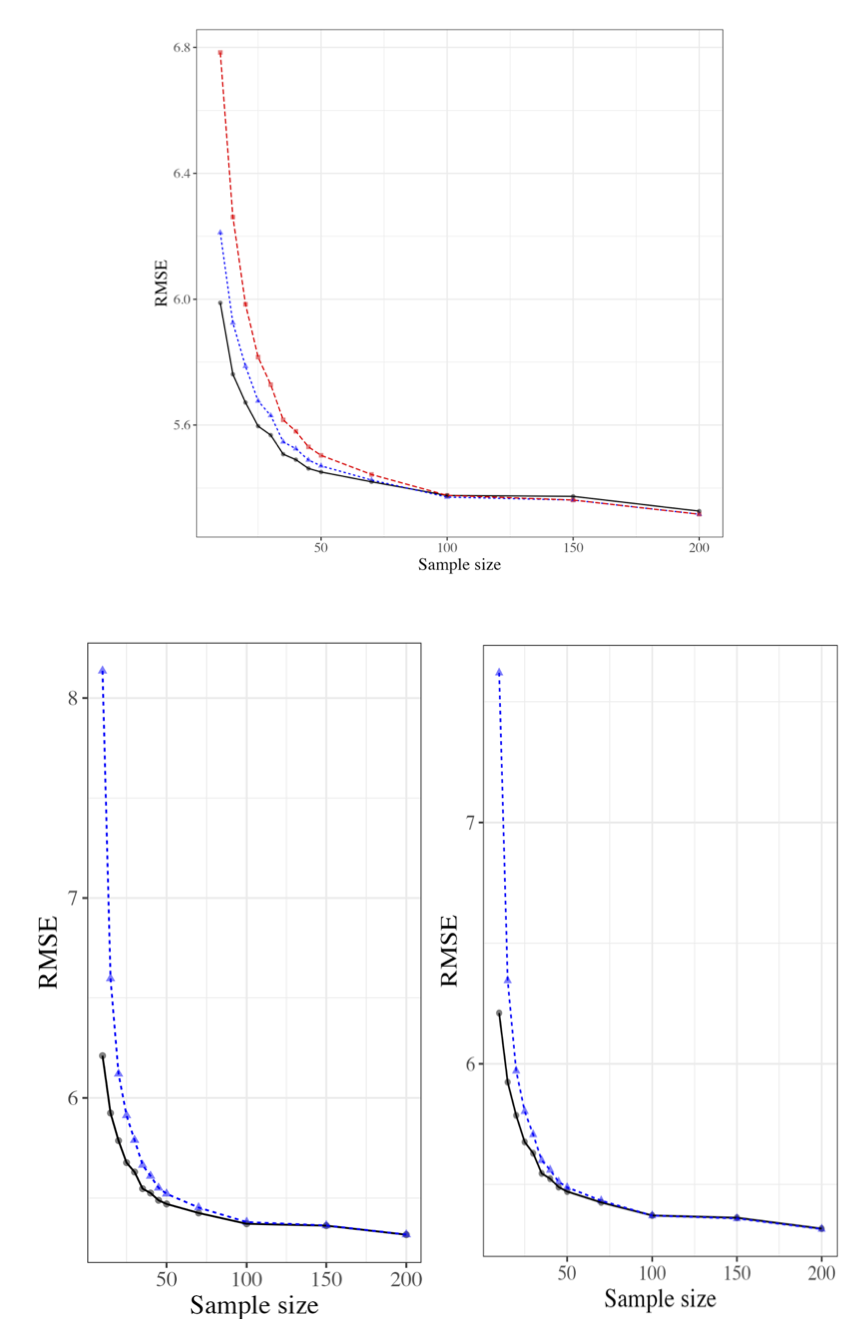}
    \caption{
Top:  c = 2 (black circles); c = 1 (blue triangles); c = 0.5 (red squares) 
Bottom left: Brier divergence (blue triangles); KL divergence (black circles)
Bottom right: Uniform prior (blue triangles); optimism-penalizing prior (black circles)}
 \label{robustness_figures}
\end{figure}

Another robustness check is to consider what happens when we use a data-generating distribution with a heavy-tailed error. Figure \ref{heavy_tails} shows the results after implementing the same linear regression experiment with an error distribution that follows a Student's t-distribution with 3 degrees of freedom.

\begin{figure}[h]
        \centering
        \includegraphics[width=0.8\textwidth]{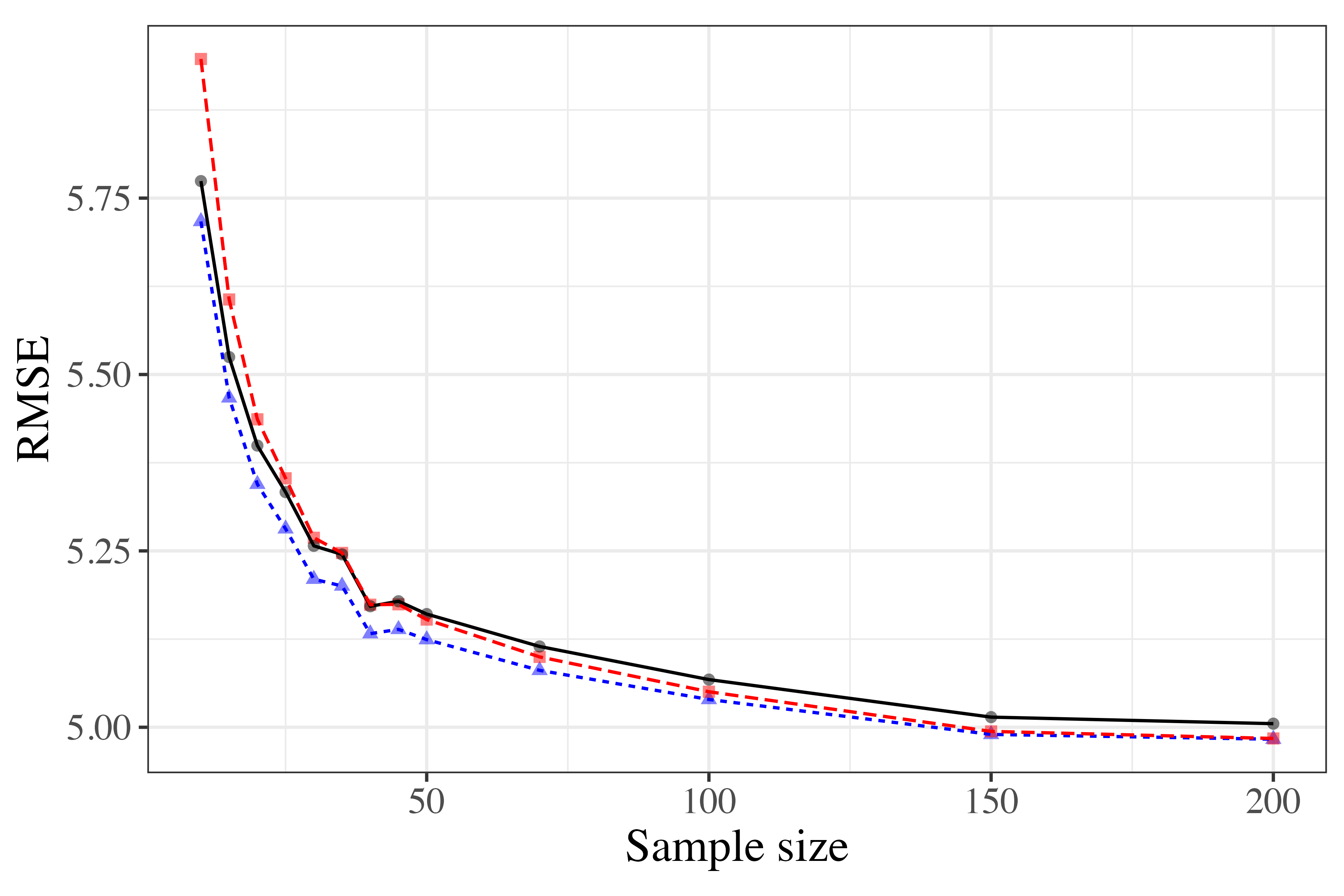}
    \caption{
Blue dotted line: Divergence-based model weighting
Red dotted line: Stacking with the log score
Black solid line: Negative exponentiated model weighting}
 \label{heavy_tails}
\end{figure}

\thispagestyle{empty}

\end{document}